\newcommand{\accom}[1]{\todo[color=green!50!white]{\'{A}: #1}} 
\DeclareMathOperator{\size}{size}
\def\mt{t\kern-0.035cm\char39\kern-0.03cm}
\def\ml{l\kern-0.035cm\char39\kern-0.03cm}
\def\md{d\kern-0.035cm\char39\kern-0.03cm}
\def\rho{\phi}
\newtheorem{thm}{Theorem}
\newtheorem{df}{Definition}
\newtheorem{obs}{Observation}
\newlength{\fs}
\tikzset{draw half paths/.style 2 args={%
  decoration={show path construction,
    lineto code={
      \draw [#1] (\tikzinputsegmentfirst) -- 
         ($(\tikzinputsegmentfirst)!0.5!(\tikzinputsegmentlast)$);
      \draw [#2] ($(\tikzinputsegmentfirst)!0.5!(\tikzinputsegmentlast)$)
        -- (\tikzinputsegmentlast);
    }
  }, decorate
}}
\tikzstyle{vertex} = [circle, draw=black, fill=black, scale= 0.5]
\tikzset{snake it/.style={decorate, decoration=snake}}
\begin{document}

\title{A quest for a fair schedule: The Young Physicists' Tournament\thanks{
Katar\'\i na Cechl\'arov\'a was supported by the  APVV-17-0568 from the Slovak Research and Development Agency. \'{A}gnes Cseh was supported by the Hungarian Academy of Sciences under its Momentum Programme (LP2016-3/2020), OTKA grant K128611, and COST Action CA16228 European Network for Game Theory. Zsuzsanna Jank\'{o} was supported by OTKA grant K128611. Mari\'an Kire\v s was supported by VEGA grant 1/0265/17. }
}

\titlerunning{Fair schedule at IYPT}        

\author{Katar\'\i na Cechl\' arov\'a \and \'Agnes Cseh \and Zsuzsanna Jank\'o \and
       Mari\' an Kire\v s \and Luk\'a\v s Mi\v no
}


\institute{K. Cechl\' arov\'a \at
              Institute of Mathematics, Faculty of Science, P. J. \v Saf\' arik University, Ko\v sice, Slovakia\\
              \email{katarina.cechlarova@upjs.sk}        
          \and
            \'A. Cseh \at
              Hasso Plattner Institute, University of Potsdam, Potsdam, Germany\\
Institute of Economics, Centre for Economic and Regional Studies, Budapest, Hungary\\
              \email{agnes.cseh@hpi.de}        
           \and
           Z. Jank\'o \at
              Department of Operations Research and Actuarial Sciences, Corvinus University of Budapest, Budapest, Hungary\\
Institute of Economics, Centre for Economic and Regional Studies, Budapest, Hungary\\
              \email{zsuzsanna.janko@uni-corvinus.hu}        
           \and
           M. Kire\v s \at
              Institute of Physics, Faculty of Science, P. J. \v Saf\' arik University,  Ko\v sice, Slovakia\\
              \email{marian.kires@upjs.sk}        
           \and
           L. Mi\v no \at
              Center for Information Science and Information Technologies,  Technology and Innovation Park, P. J. \v Saf\' arik University, Ko\v sice, Slovakia\\
              \email{lukas.mino@upjs.sk}               
}

\date{Received: date / Accepted: date}

\maketitle

\begin{abstract}
The Young Physicists Tournament is an established team-oriented scientific competition between high school students from 37 countries on 5 continents. The competition consists of scientific discussions called Fights. Three or four teams participate in each Fight, while rotating the roles of Presenter, Opponent, Reviewer, and Observer among them. 
 
The rules of a few countries require that each team announce in advance 3 problems they will present at the national tournament. The task of the organizers is to choose the composition of Fights in such a way that each team presents each of its chosen problems exactly once and within a single Fight no problem is presented more than once. Besides formalizing these feasibility conditions, in this paper we formulate several additional fairness criteria for tournament schedules. We show that the fulfillment of some of them can be ensured by constructing suitable edge colorings in bipartite graphs. To find fair schedules, we propose integer linear programs and test them on real as well as randomly generated data.

\keywords{
scheduling \and integer programming \and graph coloring \and  student  competition \and fairness
}
\end{abstract}


\section{Introduction}
\label{sec:intro}

Teams of high school students have been competing annually at the International Young Physicists’ Tournament (IYPT for short), sometimes referred to as Physics World Cup, since 1988. Each year the international jury publishes a set of 17 problems. In the preparation phase that takes several months, teams can use any resources  to solve the problems theoretically and/or experimentally and to prepare a carefully polished presentation of the results they obtain. The competition culminates in regional, national, and international tournaments that are organized in several rounds of small scientific workshops, called Fights. During a Fight, students practice how to lead scientific discussion, ask questions and evaluate the work of their adversaries by taking the roles of a Presenter, an Opponent, 
a Reviewer, and, occasionally, an Observer. Detailed information about the exact rules, schedule, past problems, winners, etc., can be found on the international webpage {\tt  http://iypt.org} and on the webpages of national committees.

The rules of the international final state that a team can challenge another team  to present a solution of any of the 17 published problems, but for their national and regional tournaments, each of the participating 37 countries can set the rules on their own. In several countries (Austria, Germany, Slovakia, Switzerland), a local tournament consists of three rounds so that  each team participates in exactly three Fights, and in each of these Fights it presents a different problem from the set of three problems it has chosen
in advance.
Some countries formulate 
additional conditions for the schedules of the tournaments.
For example, the German rules explicitly state that the schedule has to take into account the following criteria, with decreasing priority: (1) no two teams from the same school (center) compete within one Fight, (2) no team has the same Opponent more than once, (3) if possible, each team competes with 6 different teams in its 3 Fights in the tournament.

The authors of the present paper have been contacted by members of the Slovak organizing committee who felt that besides guaranteeing the fulfillment of the necessary criteria stated in the international rules, it is desirable to create comparable conditions for all the participants, to ensure their equal treatment. The first aim of this paper is to formally define the necessary (feasibility) constraints for the schedule of an IYPT tournament. Then we formulate several fairness conditions, proposed by the organizers of local tournaments.  On the theoretical side, we draw a connection between feasible and fair schedules and edge colorings of graphs.  On the practical side and to construct fair schedules we propose several  integer linear programs and test them on real and randomly generated data.

\subsection{Related work}

Scheduling problems appear in real life, often connected with the construction of timetables at schools or schedules of sports matches. They are also a popular research topic in Mathematics and Computer Science. Many variants of scheduling problems are difficult to solve in practice even for small instances. Also, scheduling problems were among the first problems proven to be computationally hard theoretically \citep{Ull75,EvenItaiShamir1975}. In solving scheduling problems many different approaches have been used, among them variants of graph coloring problems \citep{Lewis2011, Januario2016}, integer programming \citep{BriskornDrexl, Atan2018}, constraint programming \citep{BLN12},  application of SAT encoding \citep{SAT2014}, and various heuristic algorithms, such as ant colony optimization \citep{Lewis2011}. 

Fairness in connection with scheduling appears in different contexts. Here we review progress on the study of fair schedules in the three most relevant fields to our study: work shifts, timetables, and sports tournaments. Finally, we argue why student competitions should become a fourth point on the list of practical scenarios where the computation of a fair schedule is essential.

\paragraph{Work shifts} The shift scheduling problem involves determining the number of employees to be assigned to each shift and specifying the timing of their relief and breaks, while minimizing the total staffing cost and the number of employees needed \citep{Edi54,Ayk96}. Recent advances on the topic move into the direction of fairness. \citet{SB12} minimize the paid out hours under the restrictions given by the labor agreement, and, subject to this, they also integrate the preferences of laborers and fairness aspects into the scheduling model. \citet{BD14} construct a flexible MIP framework to satisfy all service requirements and contractual agreements, while respecting workers’ preferences on 
workload balancing.


\paragraph{Timetables} A widespread application of timetable design is creating a timetable for students and teachers in a school, so that it satisfies as many wishes as possible while guaranteeing that all demands regarding subjects, rooms, and working hours are satisfied. The \citet{EUR19} maintains a constantly updated list of research papers on educational timetabling. Automated timetabling has various applications outside schools as well \citep{Sch99}. In a recent paper, \citet{Vangerven2018} construct a schedule for a conference with parallel sessions that, based on preferences of participants, maximizes total attendance and minimizes session hopping.

\paragraph{Sports tournaments} Fairness plays an essential role in sports tournament scheduling \citep{DK07, BK10,VG20}. In spite of the relevance of good game schedules, very few professional leagues have adopted optimization models and software to date \citep{Ras08,NGB+10,GS12}. One of these exceptions is the national soccer tournament in Brazil. \citet{UR09} designed an ILP-based system, which was used for the first time in 2009 as the official scheduler to build the fixtures of the first and second divisions as well. Their solution minimizes the number of breaks and maximizes the number of games that open TV channels could broadcast. In works dealing with the scheduling of round robin tournaments, fairness criteria appear in the form of balancing the number of cases when teams play two consecutive  home or away games, balancing the time after the most recent game of two opposing teams, or balancing the difference between the number of games played by any two teams at any point in the schedule \citep{Miyashiro2005, Suksompong2016, Atan2018}.

\paragraph{Student competitions} IYPT has its counterpart in mathematics, the International Tournament of Young Mathematicians (ITYM), which has a similar tournament structure, with teams playing the roles of the Presenter, Opponent, Reviewer, and Observer. Another branch of student competitions organized in rounds in which teams take turns are debating tournaments \citep{Debating, Bra17}. The World Universities Debating Championship is the world's largest debating tournament and one of the largest annual international student events in the world. At their events, the British Parliamentary format is used, in which four teams participate in each round~\citep{DEB14}. Two teams form the ``government'' and the other two the ``opposition'' in each debate room, and the order of speeches assigns a different role to each of the teams. Such competitions promote democratic education and they are shown to significantly enhance student performance in the subject, hence they are currently on the rise~\citep{Spi07,PWL18}.

Compared to sports tournaments, scheduling competitions for students is an admittedly much less profitable, but highly noble branch of tournament scheduling. Up to our knowledge, no formal scheduling model for organizing student competitions has been reported on yet. In this work, we make an attempt to demonstrate how students' competitions can be organized with the aid of integer programming, which not only automatizes the cumbersome task of scheduling, but also calculates a solution that is provably more fair for the participating students. 

\subsection{Outline}

In Section~\ref{sec:details} we outline the rules and organization of the IYPT in more detail and in Section~\ref{s_notation} we formally introduce the studied problem and the related notions. Section~\ref{s_graphs} is devoted to a discussion of how the edge coloring of bipartite graphs leads to a feasible simple schedule and to schedules that give each team 3 different order positions in its 3 Fights.
 We formulate several fairness criteria for schedules; as far as we know, fairness criteria similar to ours have not been considered before in scheduling problems. In Section~\ref{s_ILP} we formulate integer linear programs to find fair schedules fulfilling alternative---weaker and stronger---forms of fairness. Then, in Section~\ref{s_comp} we report on the results we obtained when the designed ILPs were applied to real data: we used the application sets from regional tournaments in Slovakia in recent years. We also randomly generated sets of applications that have some features similar to the expected situations and performed numerical tests on these random data. 

\section{Background}\label{sec:details}
 
According to the rules of the Austrian, German, Slovak, and Swiss regional and national tournaments, each team applying for participation announces a subset of exactly 3 problems from the published set of 17 problems. This subset is called the team's {\it portfolio} and it contains the 3 problems the team will present at the tournament. A set of portfolios may look similar to the one presented in Table~\ref{t_appl}, which is a real set of portfolios from the regional tournament Bratislava 2018.

\begin{table}[ht]
{\small
\begin{center}
\begin{tabular}{|cl|cl|cl|cl|}
	      \noalign{\hrule}
 Team  & Portfolio     & Team  & Portfolio & Team  & Portfolio   & Team  & Portfolio \\
	      \noalign{\hrule}
Sharks1  & 4,6,14      & Whales1 & 3,7,14   & Turtles1 & 2,3,14   & Eagles & 4,9,16\\
Sharks2 & 10,16,17     &Whales2 & 2,5,12    & Turtles2 & 5,6,10   & Lions & 4,9,10 \\
Sharks3 & 1,7,13  &Whales3 & 4,9,10    & Bears1 & 3,4,8    & Dogs & 3,4,7\\
&&&   & Bears2 & 5,9,17    & &\\
      \noalign{\hrule}
    \end{tabular}
\end{center}
\caption{The set of portfolios in the regional tournament Bratislava 2018. We use this example instance throughout the entire paper. The participating teams are anonymized by having been given animal names. To indicate which teams are from the same school we use the name of the same animal and distinguish the different teams only by the final digit.}\label{t_appl}
}	
\end{table}

The tournament is organized in 3 rounds. In each round, the set of teams is partitioned into rooms, each of which hosts a so-called {\it Fight}.
The number of teams participating in a Fight is 3 or~4, and the number of stages in a Fight is also 3 or 4, respectively. Now we describe the structure of a Fight.

The assignment of teams to rooms in the rounds  also specifies which team will present which problem
from their portfolio.
Suppose that the set of teams in a room is 
A, B, C (see Table~\ref{tab:roles})
and assume that these teams have been assigned problems $p_A$, $p_B$, and $p_C$, respectively, to present. In the first stage of the Fight, team~A is the Presenter; it delivers a report on problem~$p_A$. Team~B is the Opponent. After the report
of the Presenter team, the Opponent team evaluates the report, stressing its pros and cons. Afterwards the third team~C, the Reviewer, can ask questions  both other teams and then the Reviewer presents an overview of the performance of the Opponent. The stage ends by the Presenter stating  
some concluding remarks. Finally the jury may ask some short questions to all three active teams. After a short break, another stage with the same structure begins, but the roles of teams are rotated, as illustrated by Table~\ref{tab:roles}.
This means that in stage two, team~B is the Presenter, team~C is the Opponent and team~A is the Reviewer; in stage three team~C is the Presenter, team~A is the Opponent and team~B is the Reviewer. Hence, each team performs each role during a Fight exactly once.

\begin{table}[ht]
\begin{center}
    \resizebox{\textwidth}{!}{
\begin{tabular}{|c|c|c|c|}
	      \noalign{\hrule}
	        &
\multicolumn{3}{c|}{Stage}\\
Team & 1 & 2 & 3 \\
 \noalign{\hrule}
 A  & Pres.  & Rev. & Opp. \\
 B  &  Opp.  & Pres. & Rev.     \\
 C & Rev. &   Opp.  & Pres. \\
	      \noalign{\hrule}
    \end{tabular}
    \hskip1cm
\begin{tabular}{|c|c|c|c|c|}
	      \noalign{\hrule}
	        &
\multicolumn{4}{c|}{Stage}\\
Team & 1 & 2 & 3 & 4 \\
 \noalign{\hrule}
 A  & Pres.  & Obs. & Rev. & Opp. \\
 B  &  Opp.  & Pres. & Obs. & Rev.     \\
 C & Rev. &   Opp.  & Pres.& Obs. \\
D & Obs. & Rev. &   Opp.  & Pres. \\

	      \noalign{\hrule}
    \end{tabular}
    }
\end{center}
\caption{Schemes for 3- and 4-team Fights, extracted from the official regulations of the IYPT~\citep{IYPT17}.}\label{tab:roles}
\end{table}

If the total number of teams is not divisible by three or if the organizers have some other issues to deal with (e.g., there are not enough rooms on the premises where the tournament takes place, or the number of available qualified jurors is small, etc.), the number of teams in a room may be 4. In such a Fight, the 4 teams also exchange their roles cyclically (see the right hand-side of Table~\ref{tab:roles}), with one of them playing the role of the Observer, which is the team not participating actively in the given stage.

Given the set of portfolios, an important task of the organizers is to prepare a schedule of the tournament. For each team, the schedule specifies the problem, the room, and the stage for each of the three rounds. For example, the schedule depicted in Table~\ref{t_real_schedule} instructs team Sharks1 to present problem~4 in room 1 in the first round as the Presenter in the first stage in that Fight. According to the same schedule, Sharks1 will present problem~6 in room 4 in the second round, and it will be the third team to present a problem in that Fight; and, finally, it will present problem~14 in room 3 in the third round, again as the third Presenter in that Fight. Such an assignment clearly determines the course of the whole tournament. 

Each schedule has to fulfill the following obvious conditions.

\begin{enumerate}[(1)]
    \item[$(a)$] Each team presents exactly the 3 problems from its portfolio.
    \item [$(b)$] No problem is presented more than once during the same Fight.
    \item [$(c)$] In each Fight, the correct number of problems (3 or 4) is presented.
    \item [$(d)$] For each Fight, an ordering of Presenters is defined.
    \end{enumerate}
A schedule fulfilling conditions $(a)-(d)$ is said to be {\it feasible}. In Section~\ref{s_graphs} we will see that feasible schedules are guaranteed to exist under very mild conditions. 

A usual requirement of the organizers is to group the teams into Fights so that all participating teams in a Fight come from different schools. Besides avoiding bias and possible help between teams from the same school, such \textit{non-cooperative} schedules encourage scientific interaction between students who have not met before.

Recall now the cyclic exchange of roles of teams within a Fight, defined in Table~\ref{tab:roles}.  A team may feel to be put at a disadvantage 
if it plays the role of team~A in all its Fights, because then it has to start each of these Fights as the first Presenter.
So we introduce another fairness notion: we say that a schedule is {\it order fair} if each team has three different order positions in its three Fights.

The need for our most important fairness notion only arose in 2019, as a second step of making the Slovak regional rounds of IYPT more student-centered. A few years earlier, new members had joined the local organizing committee of the IYPT. These colleagues, originally physics teachers, who also served as jury members before, were familiar with the situation in teaching physics in local secondary schools and also with students' needs. Their first initiative was to establish the system that allowed teams to choose three problems.\footnote{Previously, Slovak regional tournaments were organized according to the international rules where during a fight a team could challenge another team with any of the 17 problems published by the international jury. The new committee members told us about several cases when students were deeply disappointed when  they lost the competition because some other team played strategically and challenged them by a problem they were not prepared for, as they put more effort into another problem or they could not solve it because their school did not have the necessary experimental equipment unlike the school of their adversaries. These colleagues felt that it is more important to stress the motivational aspect of the competition and let students experience the pride in being able to present the work they had done in preparation for the tournament. So they suggested the change of rules in that each team presented only the problems chosen in advance. But, when these rules were applied, they also noticed the irregularities mentioned above.} The new rules brought new challenges, and in 2019, the same organizers contacted the authors of the present paper with the request to help to formalize fairness conditions and design an automated procedure for computing schedules meeting these criteria.

We now explain the most challenging 
fairness concern 
for schedules on an intuitive level and by an example. Assume that teams $t_i$ and $t_j$ are in the same Fight and team $t_i$ presents problem~$p$. If team $t_j$ has problem $p$ in its portfolio too, then it has either presented $p$ before in a previous round or will present it in some later round. In the former case, team $t_j$ had prepared its own presentation for $p$, moreover, it has already heard the comments of its own Opponent and Reviewer on problem $p$, so now team $t_j$ is likely to be better prepared for the tasks of the Opponent as well as of the Reviewer. In the latter case, team $t_j$ has a chance to update its own presentation based on what it has heard during the presentation of problem $p$ by team $t_i$ and also be better prepared for answering the challenges of its future Opponent and Reviewer on problem $p$. The organizers wish to avoid that such injustice happens. 

We say that a feasible schedule is {\it fair} if the following condition for each pair of teams $t_i,t_j$ is fulfilled: If teams $t_i,t_j$ are 
in the same Fight at any time during the tournament and team $t_i$ presents problem $p$ in this Fight, then problem $p$ is not in the portfolio of team~$t_j$. 

In reality, it has not always been the case that the used schedules fulfilled the fairness requirements. Table~\ref{t_real_schedule} depicts the real schedule of the regional tournament Bratislava 2018, corresponding to the set of portfolios from Table~\ref{t_appl}. 
Have a look at team Lions. In the first round, 
it presents problem~9 and sees team Sharks1 presenting problem~4 in the same Fight. In the second round, team Lions presents problem~4 and sees team Sharks2 presenting problem~10. In the final round, team Lions presents problem~10.
This means that team Lions had seen two problems from its portfolio, namely problems~4 and~9, before it had to present them. This is clearly unfair, as team Lions had a great advantage to other teams. For the set of portfolios in this regional tournament a fair schedule exists, and it is presented in Table~\ref{t_fair_schedule}.

\begin{table}[ht]
\centering
    \resizebox{\textwidth}{!}{
\begin{tabular}{|cl|lc|lc|lc|lc|}
   \noalign{\hrule}
  &&
\multicolumn{2}{c}{Room 1} & \multicolumn{2}{|c}{Room 2} & \multicolumn{2}{|c}{Room 3} & \multicolumn{2}{|c|}{Room 4} \\
	      \noalign{\hrule}
&&Team & Problem & Team & Problem & Team & Problem   & Team & Problem \\
   \noalign{\hrule}
\parbox[t]{2mm}{\multirow{4}{*}{\rotatebox[origin=c]{90}{{\bf Round 1}}}}
&A &Sharks1  & 4       & Whales2 & 2      &  Bears1  & 4   &  Whales3 & 10  \\
&B &Turtles1  & 3       & Sharks3 & 7      &  Whales1  & 14     &  Dogs & 3  \\
&C &Lions  & 9           & Eagles & 9        &  Turtles2  & 10      &  Bears2 & 5  \\
&D  &  &                  &&                   &&                          & Sharks2 & 17         \\    
   \noalign{\hrule}
& &
\multicolumn{2}{c}{Room 1} & \multicolumn{2}{|c}{Room 2} & \multicolumn{2}{|c}{Room 3} & \multicolumn{2}{|c|}{Room 4} \\
	      \noalign{\hrule}
&&Team & Problem & Team & Problem & Team & Problem   & Team & Problem \\
   \noalign{\hrule}
\parbox[t]{2mm}{\multirow{4}{*}{\rotatebox[origin=c]{90}{{\bf Round 2}}}}
&A &Lions  & 4        & Turtles1 & 14      &  Dogs  & 7           &  Eagles & 4  \\
&B &Sharks2  & 10       & Whales3 & 4      &  Bears2  & 9     &  Whales1 & 7  \\
&C &Bears1  & 3 & Sharks3 & 1        &  Whales2  & 12      &  Sharks1 & 6  \\
&D &                   &&                     &&                          & &  Turtles2 &  5          \\    
   \noalign{\hrule}
&&
\multicolumn{2}{c}{Room 1} & \multicolumn{2}{|c}{Room 2} & \multicolumn{2}{|c}{Room 3} & \multicolumn{2}{|c|}{Room 4} \\
	      \noalign{\hrule}
&&Team & Problem & Team & Problem & Team & Problem   & Team & Problem \\
   \noalign{\hrule}
\parbox[t]{2mm}{\multirow{4}{*}{\rotatebox[origin=c]{90}{{\bf Round 3}}}}
&A & Bears2  & 17        & Sharks2 & 16      &  Turtles2  & 6           &  Sharks3 & 13  \\
&B &Whales2  & 5            & Dogs & 4      &  Eagles  & 16                   &  Bears1 & 8  \\
&C &Turtles1  & 2           & Whales1 & 3        &  Sharks1  & 14           &  Whales3 & 9  \\
&D &&                    &&                     &&                                         &  Lions &   10       \\    
   \noalign{\hrule}
\end{tabular}
}
\caption{The real  schedule used in the regional tournament Bratislava 2018.}\label{t_real_schedule}
\end{table}

Notice further that the schedule in Table~\ref{t_fair_schedule} is also unbalanced in another way. Team Sharks1 has to oppose or review 6 different problems during the tournament, namely problems 2, 3, 7, 9, 10, and 17. By contrast, team Turtles1 opposes or reviews only four problems: 1, 4, 5, and 7. Clearly, this gives Turtles1 another form of advantage to team Sharks1. We will say that a feasible schedule is {\it strongly fair} if each team deals with each problem (in any role) during the tournament at most once. 

\begin{table}[ht]
\centering
    \resizebox{\textwidth}{!}{
\begin{tabular}{|cl|lc|lc|lc|lc|}
   \noalign{\hrule}
& & 
\multicolumn{2}{c}{Room 1} & \multicolumn{2}{|c}{Room 2} & \multicolumn{2}{|c}{Room 3} & \multicolumn{2}{|c|}{Room 4} \\
	      \noalign{\hrule}
&&Team & Problem & Team & Problem & Team & Problem   & Team & Problem \\
   \noalign{\hrule}
\parbox[t]{2mm}{\multirow{4}{*}{\rotatebox[origin=c]{90}{{\bf Round 1}}}}
&A&Sharks1  & 6       & Lions & 9                &  Sharks3  & 1   &  Bears1 & 8  \\
&B&Whales1  & 3       & Sharks2 & 16         &  Whales3  & 4     &  Turtles2 & 10  \\
&C&Bears2  & 9    & Whales2 & 12        &  Turtles1  & 3      &  Eagles & 9 \\
&D&&                      &&                     &&                          & Dogs & 7         \\    
 \noalign{\hrule}
&& 
\multicolumn{2}{c}{Room 1} & \multicolumn{2}{|c}{Room 2} & \multicolumn{2}{|c}{Room 3} & \multicolumn{2}{|c|}{Room 4} \\
	      \noalign{\hrule}
&&Team & Problem & Team & Problem & Team & Problem   & Team & Problem \\
   \noalign{\hrule}
\parbox[t]{2mm}{\multirow{4}{*}{\rotatebox[origin=c]{90}{{\bf Round 2}}}}
&A&Sharks1  & 4        & Sharks2 & 10      &  Lions  & 4           &  Whales3 & 10  \\
&B&Whales1  & 7       & Whales2 & 2      &  Sharks3  & 7       &  Bears2 & 5  \\
&C&Turtles1  & 2        & Bears1 & 4   &  Turtles2  & 6       &  Eagles & 16  \\
&D &  &                      &&                     &&                          & Dogs & 3         \\    
\noalign{\hrule}
&& 
\multicolumn{2}{c}{Room 1} & \multicolumn{2}{|c}{Room 2} & \multicolumn{2}{|c}{Room 3} & \multicolumn{2}{|c|}{Room 4} \\
	      \noalign{\hrule}
&&Team & Problem & Team & Problem & Team & Problem   & Team & Problem \\
   \noalign{\hrule}
\parbox[t]{2mm}{\multirow{4}{*}{\rotatebox[origin=c]{90}{{\bf Round 3}}}}
&A&Lions  & 10                & Sharks2 & 17      &  Whales2  & 5           &  Sharks3 & 13  \\
&B&Sharks1  & 14            & Whales1 & 14      &  Turtles1  & 14         &  Whales3 & 9  \\
&C&Bears2  & 17           & Eagles & 4        &  Dogs  & 4           &  Bears1 & 3  \\
&D &&                    &&                     &&                                         &  Turtles2 &  5       \\    
   \noalign{\hrule}
\end{tabular}
}
\caption{A fair schedule for the regional tournament Bratislava 2018. Since no Fight contains two teams from the same school, this schedule is non-cooperative. Strong fairness does not hold; e.g., team Whales1 deals with problem~4 in Round~2 and Round~3 as well. Team Lions plays role A in all its Fights, thus the schedule is not order fair.}\label{t_fair_schedule}
\end{table}

\section{Notation and optimality concepts}\label{s_notation}

We start this section with introducing the notation used thorough this paper and formalizing the feasibility requirements for a schedule. In Section~\ref{s_feasible3}, we define two optional features of feasible schedules, which can be enforced individually and on the top of feasibility, if the decision maker finds them desirable. Then we proceed to formalize the three degrees of fairness in Section~\ref{s_fair}.

$T=\{t_1,\dots,t_n\} $ is a set of $n$ {\bf teams} with a partition ${\cal T}=\{T_1,T_2,\dots,T_\Lambda\}$, where the partition sets are called {\it schools}. $P=\{p_1,\dots,p_m\}$ is a set of $m$ {\bf problems}. 

Each team $t$ applies with a set of exactly 3 problems from set $P$; these three problems will be called the {\bf portfolio} of team $t$ and denoted by~$P(t)$. The {\bf profile} is an $n$-tuple of portfolios $\Pi = (P(t_1), P(t_2),\dots, P(t_n))$. 
For a given $S\subseteq T$, we denote by $P(S)$ the set of problems that appear in the portfolio of at least one team from $S$, that is,
$P(S)=\cup_{t\in S} P(t)$. If $p\notin P(t)$ for team $t\in T$ and problem $p\in P$, then we say that team $t$ {\bf avoids} problem~$p$.  
 
There are $s$ {\bf rooms} $R=\{r_1,\dots,r_s\}$. The set of rooms is partitioned into two subsets $R_3$ and~$R_4$. If $r\in R_3$ then room $r$ hosts 3-Fights (i.e., exactly three teams perform a Fight in $r$); if $r\in R_4$ then room $r$ hosts 4-Fights (Fights of 4 teams). The size of room $r$ is denoted by~$\size(r)$. Obviously, $\size(r)=3$ for $r\in R_3$ and $\size(r)=4$ if $r\in R_4$. 

An instance of IYPT is a tuple $(T,P,\Pi,R)$, denoting the teams, the problems, the profile, and the rooms. Now we describe the output, denoted by $(\mathcal{P}, \mathcal{R},\mathcal{O})$. For an integer $k$, the notation $[k]$ represents the set $\{1,2,\dots,k\}$. 

There are 3 \textbf{rounds}, and a \textbf{Fight} is uniquely defined by the pair $(j,r)$, where $j$ is a round and $r$ is a room. For each of the 3 rounds, each team needs to be assigned the problem it will present, the room in which this presentation will take place, and its stage of presentation within the Fight. We formalize this as follows. The collection $\mathcal{P}=\{\pi_j:T\to P; \ j\in[3]\}$ consists of functions $\pi_1, \pi_2, \pi_3$, where each of these three functions maps exactly one problem to each team. The value  $\pi_j(t)$ for a given team $t$ is the problem $t$ will present in round~$j$. Another collection of functions is $\mathcal{R}=\{\rho_j: T \to R; \ j\in[3]\}$; 
where $\rho_j(t)$ for
 a given team $t$ specifies the room $t$ is assigned to for round~$j$. Finally, $\mathcal{O}=\{ \omega_j: T \to \{A,B,C,D\};\ j\in[3]\}$ is a collection of three functions that map each team to an element in the order set $\{A,B,C,D\}$. For 
$t\in T$, function $\omega_j(t)$ tells in which stage team $t$ will be the Presenter in round~$j$. To simplify notation, capital letters A, B, C, and D will be reserved for denoting that a team plays the role of the Presenter in stage 1, 2, 3, and 4, respectively, within a Fight.

Now we are ready to define a feasible schedule formally. 

\begin{df}\label{def:feasible} 
A {\bf feasible schedule} is a triple $(\mathcal{P}, \mathcal{R},\mathcal{O})$ 
where $\mathcal{P}=\{\pi_j:T\to P; \ j\in[3]\}$,
$\mathcal{R}=\{\rho_j:T\to R; \ j\in[3]\}$ and
$\mathcal{O}=\{ \omega_j:T\to \{A,B,C,D\};\ j\in[3]\}$ 
are mappings of teams to problems, rooms, and order set $\{A,B,C,D\}$,
respectively, such that
\begin{enumerate}[(i)]
\item $\{\pi_1(t),\pi_2(t),\pi_3(t)\}=P(t)$ for each team $t\in T$;
\item if $\rho_j(t)=\rho_j(t')$ then $\pi_j(t)\ne\pi_j(t')$ for round $j$ and each pair of different teams $t,t'\in T$;
\item $|\{t\in T\ :\ \rho_j(t)=r\}|=\size(r)$ for each round $j$ and each room $r\in R$;
\item $\{ \omega_j(t)\ :\  \rho_j(t)=r\}=\{A,B,C\}$ for each $j\in[3]$ and $r\in R_3$ and \\ $\{ \omega_j(t)\ :\  \rho_j(t)=r\}=\{A,B,C,D\}$ for each $j\in[3]$ and~$r\in R_4$.
\end{enumerate}
\end{df}
As described above, the interpretation of the mappings in Definition~\ref{def:feasible} is such that $\pi_j(t)$ denotes the problem presented by team $t$ in round $j$, $\rho_j(t)$ denotes the room to which
team $t$ is assigned in round $j$, and $ \omega_j(t)$ corresponds to the order of team $t$ in round~$j$. Condition $(i)$ then ensures that each team presents exactly the problems from its portfolio during the tournament; condition $(ii)$ means that in no Fight the same problem is presented more than once; condition $(iii)$ ensures the correct number of teams for each room, i.e., this should be equal to the size of the respective room; and finally, condition $(iv)$ makes sure that exactly one team is chosen to be the Presenter in each stage of a Fight. 
These points correspond to the requirements listed in Section~\ref{sec:details}.


\subsection{Refinement of feasible schedules}\label{s_feasible3}

To avoid cooperation of teams from the same school, a schedule might be required to prevent that two teams from the same school participate in the same Fight. We remind the reader that 
one partition subset from $\mathcal{T}$
corresponds to the set of teams from the same school.

\begin{df}\label{def_non-coop}
A schedule is {\bf non-cooperative} if it is feasible and
$$\rho_j(t)\ne \rho_j(t') \mbox{\ for each \ } j\in[3]$$
whenever $t$ and $t'$ belong to the same partition subset $T_i \in \mathcal{T}$, $i \in [\Lambda]$.
\end{df}

\noindent
The following definition ensures that no team has the same ordering position (A, B, C, D) in two Fights it participates in.
\begin{df}\label{def_order_fair}
A schedule is {\bf order fair} if it is feasible and
$|\{ \omega_1(t), \omega_2(t), \omega_3(t)\}|=3$
for each $t\in T$.
\end{df}

\subsection{Fairness properties}\label{s_fair}

The most striking problem with feasible schedules is that certain teams have considerable advantage to others, if they repeatedly encounter the problems in their own portfolio. In the following, we define 3 degrees of fairness based on restrictions applied to what presentations a team can witness. The condition that no team can see a presentation of a problem in its portfolio by some other team is captured by Definition~\ref{def_fair}.

\begin{df}\label{def_fair}
A schedule is {\bf fair} if it is feasible and the following condition
holds for all rounds $j\in[3]$:
\begin{equation}\label{df_fair}
\mbox{if\ } \rho_j(t)=\rho_j(t') \mbox{\ for two different teams\ }t,t'\in T \mbox{\ and\ } \pi_j(t)=p, \mbox{\ then \ } p\notin P(t'). 
\end{equation}
\end{df}

In some cases, a fair schedule does not exist or it cannot be computed. For these cases, the organizers suggested to `sacrifice' the fairness of the last round. Enforcing the fairness condition (\ref{df_fair}) for the first two rounds only  ensures that no team presents a problem after having watched some other team presenting the same problem, which would be a hard violation of fairness. However, it allows a team to play the role of the Opponent or the Reviewer for a problem it presented in an earlier round. This is clearly a milder violation of fairness. The following definition captures this relaxation of fairness.

\begin{df}\label{def_wfair}
A schedule is {\bf weakly fair} if it is feasible and condition (\ref{df_fair})
holds for rounds~$j=1,2$.
\end{df}

To define a stronger form of fairness, let us introduce the following notation that represents the set of problems that team $t$ deals with in round $j$ in any role (Presenter, Opponent, Reviewer, or, in case of 4-rooms, Observer).
$$P(j,\rho_j(t))=\{p\in P\ : \mbox{\ there exists a team $t'\in T$ such that \ }  \rho_j(t)=\rho_j(t') \ \mbox{and}\ \pi_j(t')=p\}$$

\begin{df}\label{def_sfair}
A schedule is {\bf strongly fair} if it is feasible and for each team $t\in T$ the following holds:
\begin{equation}\label{e_sfair}
|\{P(1,\rho_1(t))\cup P(2,\rho_2(t))\cup P(3,\rho_3(t))\}| = \size(\rho_1(t))+\size(\rho_2(t))+\size(\rho_3(t)).
\end{equation}
\end{df}

\noindent In other words, Definition~\ref{def_sfair} means that no two problems a team $t$ deals with during the tournament are identical. In particular, if $p\in P(t)$ and team $t$ can see the presentation of problem $p$ in some Fight, then this implies that team $t$ deals with $p$ at least twice (the other occasion is when $t$ presents $p$) and hence condition (\ref{e_sfair}) is violated for team~$t$. Therefore we have the following relation between fairness notions.

\begin{obs}
Each strongly fair schedule is fair and each fair schedule is weakly fair. 
\end{obs}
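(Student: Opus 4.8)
The plan is to prove the two asserted implications separately; all three fairness notions are defined on top of feasibility, so that common prerequisite requires no attention.

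The implication \emph{fair} $\Rightarrow$ \emph{weakly fair} is immediate. Fairness (Definition~\ref{def_fair}) demands that condition~(\ref{df_fair}) hold for every round $j\in\{1,2,3\}$, whereas weak fairness (Definition~\ref{def_wfair}) demands it only for $j\in\{1,2\}$. Since the latter quantifies over a subset of the rounds, it is a logical weakening, and there is nothing to verify beyond observing that both notions require the same underlying feasible schedule.

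The substance lies in \emph{strongly fair} $\Rightarrow$ \emph{fair}, which I would establish by contraposition. The first step is a counting observation valid for \emph{every} feasible schedule: in the Fight held in room $\rho_j(t)$ during round $j$, condition~(iii) of Definition~\ref{def:feasible} forces exactly $\size(\rho_j(t))$ teams to be present, and condition~(ii) forces the problems they present to be pairwise distinct, so precisely $\size(\rho_j(t))$ distinct problems occur there. Hence $|P(j,\rho_j(t))|=\size(\rho_j(t))$ for each team $t$ and round $j$. Plugging this into the defining equality~(\ref{e_sfair}) of strong fairness, the cardinality of a union of three sets equals the sum of their individual cardinalities if and only if those sets are pairwise disjoint. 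Thus strong fairness is equivalent to the statement that, for each team $t$, the three per-round problem sets $P(1,\rho_1(t))$, $P(2,\rho_2(t))$, $P(3,\rho_3(t))$ are pairwise disjoint, i.e.\ no team deals with the same problem in two different rounds.

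With this reformulation, suppose a feasible schedule fails to be fair. Then there are a round $j$, two distinct teams $t,t'$ with $\rho_j(t)=\rho_j(t')$, and a problem $p=\pi_j(t)$ with $p\in P(t')$. Because $t$ occupies room $\rho_j(t')$ in round $j$ and presents $p$, we get $p\in P(j,\rho_j(t'))$. On the other hand, condition~(i) guarantees that the portfolio $P(t')$ is exactly the set of problems $t'$ presents over the three rounds, so there is a unique round $j'$ with $\pi_{j'}(t')=p$, whence $p\in P(j',\rho_{j'}(t'))$. The rounds $j$ and $j'$ must differ: were $j=j'$, both $t$ and $t'$ would present $p$ in the same Fight, contradicting condition~(ii). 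Therefore $p$ lies in two distinct members of $\{P(1,\rho_1(t')),P(2,\rho_2(t')),P(3,\rho_3(t'))\}$, these sets are not pairwise disjoint, and by the reformulation team $t'$ violates~(\ref{e_sfair}); the schedule is not strongly fair. This yields the contrapositive and hence the claim. The one point I would take care to spell out is the counting identity $|P(j,\rho_j(t))|=\size(\rho_j(t))$, since the entire argument rests on re-reading~(\ref{e_sfair}) as pairwise disjointness; the subsequent ``$p$ appears in two rounds'' contradiction is then routine and poses no real combinatorial obstacle.
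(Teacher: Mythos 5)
Your proposal is correct and takes essentially the same route as the paper: the paper likewise treats fair $\Rightarrow$ weakly fair as immediate from the definitions, and establishes strongly fair $\Rightarrow$ fair by observing that a team which sees a problem from its own portfolio presented must deal with that problem at least twice (the other occasion being its own presentation), thereby violating condition~(\ref{e_sfair}). You merely spell out what the paper leaves implicit --- the counting identity $|P(j,\rho_j(t))|=\size(\rho_j(t))$, the reading of~(\ref{e_sfair}) as pairwise disjointness of the per-round problem sets, and the check via condition~(ii) of Definition~\ref{def:feasible} that the two occurrences lie in distinct rounds --- which is a sound elaboration, not a different argument.
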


\section{Feasible solutions via graph coloring}\label{s_graphs}

In this section we utilize combinatorial tools to derive positive results for feasible schedules. With the help of edge colorings and basic theorems in matching theory, we characterize the existence of so-called simple solutions in Section~\ref{s_simple}, and give a constructive algorithm to compute an order fair schedule in Section~\ref{s_order}.

First, we recall some basic notions of the graph theory  used in this section. A graph is a pair $G=(V,E)$, where $V$ is a set of vertices and $E$ is a set of edges, i.e., pairs of vertices. We say that $G$ is bipartite, if the vertex set $V$ can be partitioned into two sets so that each edge connects two vertices from different partitions. If $e=\{u,v\}$ is an edge, we say that edge $e$ is incident to  vertices $u$ and $v$, and that vertices $u,v$ are adjacent.  The number of edges incident to a vertex $v$ is called its degree and is denoted by $\deg(v)$; while $\Delta(G) = \max_{v \in V(G)}{\deg(v)}$ is the maximum degree in graph~$G$. 
The neighborhood of a vertex set $U\subseteq V$ in $G$ is the set $N(U)$ of all vertices that are adjacent to vertices in~$U$.  For a set of vertices U, we denote by~$G(U)$ the  subgraph of $G$ containing  $U$, all the edges incident to vertices in $U$, and the other vertices incident to these edges. A function $c:E\to {\mathbb N}$ is an {\bf edge coloring} of $G$ if $c(e)\ne c(e')$ whenever two edges $e,e'$ share a common vertex. A matching in $G$ is a set of edges $M$ such that no two edges in $M$ have a vertex in common. For a thorough review on these and other basic graph-theoretic notions, we recommend consulting the book of~\cite{Diestel}.

\subsection{Simple solutions}\label{s_simple}

In this section we construct feasible schedules that keep the composition of the teams in each room fixed in all three rounds. Such schedules will be called simple.

\begin{df}\label{d_simple}
A schedule is {\bf simple} if it is feasible and
$\rho_1(t)=\rho_2(t)=\rho_3(t)$ for each team $t\in T$.
\end{df}

 The drawback of a simple schedule is that the students can only meet and exchange ideas with a very small subset of other participants. Thus, if possible, simple schedules should be avoided in reality. In this paper we only use this concept to ensure that a feasible schedule always exists if some very mild conditions are fulfilled.

For a set of portfolios $\Pi$ and a subset of teams $S\subseteq T$ we shall denote by $G(S)$ the bipartite graph $G(S)=(S\cup P(S),E_S)$ such that the pair $\{t,p\}\in E_S$ if and only if $t\in S$ and $p\in P(t)$. Figure~\ref{fi:graph} illustrates the graph $G(T)$ for the instance from Table~\ref{t_appl}.

\begin{figure}[htb]
    \centering
    \resizebox{\textwidth}{!}{
	\begin{tikzpicture}[transform shape]
	
	    \tikzstyle{edge} = [very thick]
        
		\pgfmathsetmacro{\b}{1.25}
		\pgfmathsetmacro{\d}{\b*3}
		\def\teams{{S1,S2,S3,W1,W2,W3,T1,T2,B1,B2,E,L,D}}
    
		\foreach \i in {1,..., 17} {
		    \node[vertex,label=below:$p_{\i}$] (p\i) at ({\b *13* (\i-1)/17}, 0) {};
		}
		
		\node[vertex,label=above:S1] (t1) at ({\b * 0}, \d) {};
		\node[vertex,label=above:S2] (t2) at ({\b * 1}, \d) {};
		\node[vertex,label=above:S3] (t3) at ({\b * 2}, \d) {};
		\node[vertex,label=above:W1] (t4) at ({\b * 3}, \d) {};
		\node[vertex,label=above:W2] (t5) at ({\b * 4}, \d) {};
		\node[vertex,label=above:W3] (t6) at ({\b * 5}, \d) {};
		\node[vertex,label=above:T1] (t7) at ({\b * 6}, \d) {};
		\node[vertex,label=above:T2] (t8) at ({\b * 7}, \d) {};
		\node[vertex,label=above:B1] (t9) at ({\b * 8}, \d) {};
		\node[vertex,label=above:B2] (t10) at ({\b * 9}, \d) {};
		\node[vertex,label=above:E] (t11) at ({\b * 10}, \d) {};
		\node[vertex,label=above:L] (t12) at ({\b * 11}, \d) {};
		\node[vertex,label=above:D] (t13) at ({\b * 12}, \d) {};	
		
        \foreach \i/\j in {1/4,1/6,1/14,2/10,2/16,2/17,3/1,3/7,3/13,4/3,4/7,4/14,5/2,5/5,5/12,6/4,6/9,6/10, 7/2,7/3,7/14,8/5,8/6,8/10,9/3,9/4,9/8,10/5,10/9,10/17,11/4,11/9,11/16,12/4,12/9,12/10,13/3,13/4,13/7} {
            \draw[edge] (t\i) -- (p\j);
        }
	\end{tikzpicture}
	}
    \caption{The portfolios from Table~\ref{t_appl}, represented by the bipartite graph $G(T)=(S\cup P(T),E_T)$. The team names are abbreviated to their first letter and team number, e.g., S1 denotes Sharks1.}
    \label{fi:graph}
\end{figure}
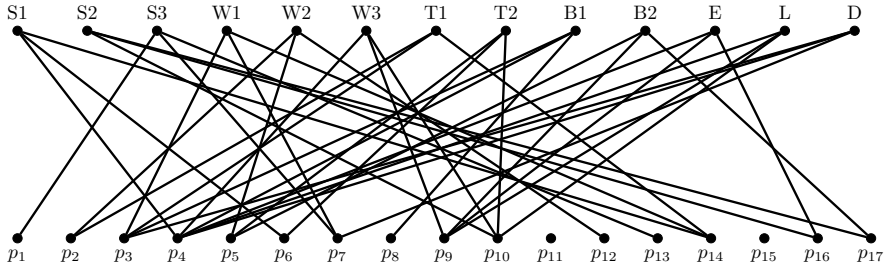

The official rules of the IYPT prefer 3-team Fights and admit 4-team Fights only if the total number of teams $n$ is not divisible by 3. In such cases,  the number of 4-team Fights, i.e., $|R_4|$, should be equal to $n$ modulo~3.  We will deal with the cases when $n$ modulo 3 is equal to 0, 1, and 2 separately.

\begin{thm}\label{t_simple}
If the number of teams $n$ is divisible by 3, then a simple schedule exists.
\end{thm}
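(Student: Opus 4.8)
The plan is to exploit the fact that $n$ divisible by $3$ lets us restrict to $3$-Fights only, and then to reduce the construction inside each room to a bipartite edge-coloring problem. First I would partition the team set $T$ arbitrarily into $n/3$ triples; declare each triple to be the set of teams assigned to one room $r\in R_3$, and set $\rho_1(t)=\rho_2(t)=\rho_3(t)=r$ for every team $t$ of that triple. This makes the schedule simple by construction and immediately satisfies condition $(iii)$ of Definition~\ref{def:feasible}, since each room receives exactly $\size(r)=3$ teams. Because the room composition is fixed across rounds, the rooms decouple entirely, so conditions $(i)$ and $(ii)$ can be arranged independently inside each room.

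Next, fix one room with team triple $S=\{t,t',t''\}$ and consider the bipartite graph $G(S)=(S\cup P(S),E_S)$ defined above. Every team vertex has degree exactly $3$ (each portfolio has three problems), and every problem vertex has degree at most $3$ (at most the three teams of $S$ can list it), so $\Delta(G(S))=3$. The key step is to invoke König's edge-coloring theorem for bipartite graphs (see~\cite{Diestel}): the chromatic index of a bipartite graph equals its maximum degree, so $G(S)$ admits a proper edge coloring $c\colon E_S\to\{1,2,3\}$. I would then read the colors as rounds: for each $t\in S$ and each round $j\in[3]$, let $\pi_j(t)$ be the unique problem $p$ with $c(\{t,p\})=j$.

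It then remains to verify feasibility. Since the three edges at a team vertex carry three distinct colors out of the three available, every team is assigned each of its three portfolio problems exactly once over the three rounds, which is condition $(i)$. If two teams of the same room presented the same problem $p$ in the same round $j$, the edges $\{t,p\}$ and $\{t',p\}$ would share the vertex $p$ while both receiving color $j$, contradicting that $c$ is a proper edge coloring; this yields condition $(ii)$. Finally, the order functions $\omega_j$ may be chosen arbitrarily, sending the three teams of each room to $\{A,B,C\}$ by any bijection in each round, which gives condition $(iv)$ (order fairness is not demanded here, only feasibility). Collecting the colorings of all rooms produces the desired simple feasible schedule.

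I do not expect a genuine obstacle in this case: the only point requiring care is that the reduction lands exactly on a $\Delta=3$ bipartite edge-coloring instance, to which König's theorem applies room by room. The real difficulties are deferred to the cases $n\equiv 1$ and $n\equiv 2\pmod 3$, where $4$-Fights must be introduced and the clean ``degree $3$ everywhere'' structure no longer holds; in the present divisible case the argument is essentially a direct application of König's theorem.
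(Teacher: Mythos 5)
Your proposal is correct and follows essentially the same route as the paper: an arbitrary partition into triples with rooms fixed across rounds, then K\"onig's edge-coloring theorem applied to the bipartite team--problem graph $G(S)$ of each room (maximum degree $3$), with the three color classes read as the three rounds. Your write-up is merely more explicit than the paper's in verifying conditions $(i)$, $(ii)$, and $(iv)$ of Definition~\ref{def:feasible}, but no new idea or different decomposition is involved.
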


\begin{proof}
Partition the set of teams into 3-rooms arbitrarily. The only thing to ensure a feasible schedule is to decide for each room who will present which problem in which round --- without scheduling two presentations of the same problem for the same Fight. Fix a room $r$ and assume that the three teams assigned to the three Fights to be performed in $r$ are $T(r)=\{t_1,t_2,t_3\}$. Notice that in the bipartite graph $G(T(r))$ the maximum degree of a vertex is $\Delta(G(T(r)))=3$. This is because the degrees of vertices in $T(r)$ are {\it exactly} 3 (the size of the portfolio of each team is 3) and the degrees of vertices in $P(T(r))$ are {\it at most} 3. Therefore, 
by K\"onig's theorem (\cite{Konig}, see also \cite{Diestel}, Proposition~5.3.1.), $G(T(r))$ admits an edge coloring by 3 colors. One color class corresponds to the assignment of problems to be presented by teams in one stage of the Fight.
\end{proof}

If $n$ is not divisible by 3, then we need one or two rooms with 4 teams. Now we only need to ensure that the set of portfolios contains a suitable set of 4 teams (or two disjoint quadruples of teams) that can be organized in the same room during the tournament, as the rest of teams can be dealt with according to the previous theorem. 
Notice that the assignment of problems to be presented in the three rounds in a 4-room containing the set of teams $S$ again corresponds to a 3-coloring of graph~$G(S)$. Again, by K\"onig's theorem, this is ensured if $\Delta(G(S))=3$. We will call a set of teams $S\subseteq T$ with $|S|=4$ {\bf fine} if $\Delta(G(S))=3$.

Now we discuss the case of one 4-room only.

\begin{thm} 
If the number $n$ of teams fulfills $n \equiv 1\ (\textrm{mod}\ 3)$, then
a simple schedule exists if and only if each problem $p\in P$ is avoided by at least one team.
\end{thm}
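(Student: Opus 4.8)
The plan is to reduce the existence of a simple schedule to the existence of a single \emph{fine} quadruple, and then characterize when such a quadruple exists. Because $n\equiv 1\pmod 3$, the room set $R$ consists of one $4$-room together with $(n-4)/3$ rooms of size $3$. As in the proof of Theorem~\ref{t_simple}, every $3$-room is harmless: any triple $\{a,b,c\}$ has $\Delta(G(\{a,b,c\}))=3$, so $G(\{a,b,c\})$ is edge-colorable by $3$ colors via König's theorem, and the order functions $\omega_j$ cause no trouble since condition $(iv)$ is met by any bijection of a room's teams onto its stages in each round. Hence a simple schedule exists if and only if we can pick a set $S$ of four teams for the $4$-room with $G(S)$ edge-colorable by $3$ colors; since each team vertex has degree $3$, this is exactly the requirement $\Delta(G(S))=3$, i.e. that $S$ be fine. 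So the whole statement reduces to: \emph{a fine quadruple exists if and only if every problem is avoided by at least one team.}

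For the direction ``simple schedule $\Rightarrow$ every problem avoided'', I would take the four teams $S$ occupying the unique $4$-room and observe that their three rounds of presentations form a proper edge coloring of $G(S)$ by $3$ colors: color each edge $\{t,p\}$ by the round in which $t$ presents $p$, where conditions $(i)$ and $(ii)$ guarantee that this coloring is proper. Thus $\Delta(G(S))\le 3$, so no problem vertex has degree $4$, which means no single problem lies in all four portfolios of $S$. Consequently, for every $p\in P$ at least one team of $S$ avoids $p$, which already shows $p$ is avoided by some team.

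For the converse, suppose every problem is avoided by some team and assume, for contradiction, that no fine quadruple exists, i.e. every four teams share a common problem. Fix any team $t$ with $P(t)=\{p_1,p_2,p_3\}$, put $U=T\setminus\{t\}$, and set $A_i=\{t'\in U:p_i\in P(t')\}$. Every quadruple $\{t\}\cup Y$ with $Y\subseteq U$, $|Y|=3$, has a common problem; this problem must lie in $P(t)$, forcing $Y\subseteq A_i$ for some $i$. Hence $A_1,A_2,A_3$ together contain every $3$-subset of $U$. The key combinatorial step is then: \emph{three subsets of a ground set $U$ with $|U|\ge 3$ that cover all of its $3$-subsets must include $U$ itself.} Indeed, if each $B_i:=U\setminus A_i$ were nonempty, then choosing $b_i\in B_i$ and extending the (at most three-element) set $\{b_1,b_2,b_3\}$ to a $3$-element subset $Y\subseteq U$ would produce a $3$-subset meeting every $B_i$, hence contained in no $A_i$, a contradiction. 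Therefore some $A_i=U$, meaning $p_i$ lies in the portfolio of every team, contradicting that $p_i$ is avoided by some team. This yields a fine quadruple, and placing it in the $4$-room (splitting the remaining $n-4$ teams arbitrarily into $3$-rooms and taking all edge colorings from König's theorem) produces a simple schedule.

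The main obstacle is the converse, and specifically the highlighted covering lemma that turns ``every quadruple has a common problem'' into ``some problem is universal''. A purely enumerative argument does not suffice here, since three sets of size close to $|U|$ can cover, counted with multiplicity, more than $\binom{|U|}{3}$ triples; it is the transversal (hitting-set) construction above that makes the step go through, and that is where I would concentrate the care. The smallest case $n=4$, where $|U|=3$ and the only $3$-subset of $U$ is $U$ itself, is subsumed by the same argument and needs no separate treatment.
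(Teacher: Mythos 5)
Your proof is correct, and its skeleton---reducing the existence of a simple schedule to the existence of a \emph{fine} quadruple via K\"onig's theorem for all rooms, and getting necessity from the degree bound $\Delta(G(S))\le 3$ in the $4$-room---matches the paper exactly. Where you genuinely diverge is the sufficiency direction. The paper constructs a fine quadruple explicitly: it fixes $t_1$, picks $t_2$ avoiding one problem of $P(t_1)$, and then runs a case analysis on $|P(\{t_1,t_2\})|\in\{4,5,6\}$, each time using the avoidance hypothesis to choose $t_3$ and $t_4$ so as to destroy the remaining shared problems. You instead argue by contradiction: if no quadruple is fine, every $4$-set of teams shares a problem, and your covering lemma (if $A_1,A_2,A_3\subseteq U$ with $|U|\ge 3$ together contain every $3$-subset of $U$, then some $A_i=U$, proved via the transversal $\{b_1,b_2,b_3\}$) forces some problem to lie in \emph{every} portfolio, contradicting avoidance. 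Both arguments are sound; the paper's buys an explicit polynomial-time procedure for finding the fine quadruple (in the spirit of the paper, whose point is actually computing schedules), while yours is shorter, avoids the case analysis, and isolates the clean dichotomy ``either a fine quadruple exists or some problem belongs to all portfolios,'' which makes the equivalence with the avoidance condition transparent. Your treatment of the boundary case $n=4$, where $U$ itself is the only $3$-subset, is also handled correctly by the same argument.
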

\begin{proof} As we argued above, a simple schedule exists if and only if a fine set of teams exists.
Let $t_1\in T$ be an arbitrary team and let $P(t_1)=\{p_1,p_2,p_3\}$. Let team $t_2$ be any team that avoids problem~$p_1$. Now we distinguish three cases. If $|P(\{t_1,t_2\})|=6$ then the quadruple $t_1,t_2,t_3,t_4$ is fine for any two teams $t_3,t_4$. If $|P(\{t_1,t_2\})|=5$, assume w.l.o.g.\ that $P(t_1)\cap P(t_2)=\{p_2\}$. Then choose any team $t_3$ that avoids problem $p_2$ and add an arbitrary team~$t_4$. Finally, if $|P(\{t_1,t_2\})|=4$, then
$P(t_1)\cap P(t_2)=\{p_2,p_3\}$. To get a fine quadruple, choose any team $t_3$ that avoids~$p_2$. If $t_3$ happens to avoid $p_3$ too, choose $t_4$ arbitrarily, otherwise choose $t_4$ that avoids problem~$p_3$. The other direction is straightforward: each problem adjacent to any of the four teams in the fine set $S$ is avoided by at least one of the teams in $S$, because $\Delta(G(S))=3$. All other problems are avoided by all teams in~$S$.
\end{proof}

Finally, we turn to the case of two 4-rooms.
A necessary and sufficient condition for the existence of two disjoint fine sets of teams follows from Corollary 4.2.\ of \citet{Keszegh}. To be able to formulate this assertion, let us call a set $\Pi$ of $n$ portfolios {\bf special} if it has the following structure: there are $n-3$ portfolios of the form $\{p_i,p_j,p_k\}$ for some $i,j,k\in [m]$ and the remaining 3 portfolios are of the form $\{p_i,q_1,q_2\}$, $\{p_j,q_3,q_4\}$, and $\{p_k,q_5,q_6\}$, where $q_u\notin \{p_i,p_j,p_k\}$ for each~$u\in[6]$. A special set of portfolios is illustrated by Figure~\ref{fi:special_portfolio}.

 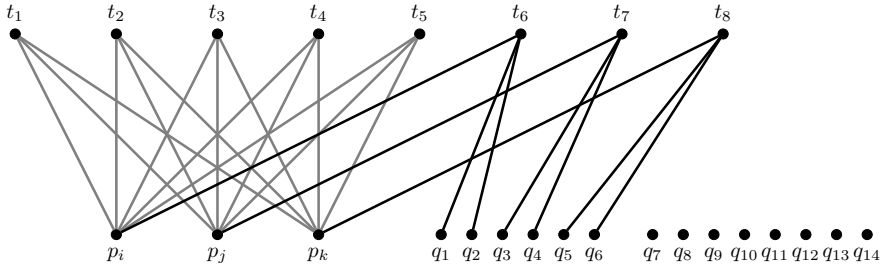
\begin{figure}[htb]
    \centering
        \resizebox{\textwidth}{!}{
	\begin{tikzpicture}[transform shape]
	
	    \tikzstyle{edge} = [very thick]
       	\pgfmathsetmacro{\b}{1.65}
		\pgfmathsetmacro{\d}{\b*2}

		\foreach \i in {1,..., 8} {
		    \node[vertex,label=above:$t_{\i}$] (t\i) at ({\b * (\i- 1)}, \d) {};
		}
		\foreach \i in {1,..., 6} {
		    \node[vertex,label=below:$q_{\i}$] (q\i) at ({\b *3 + (\i + 3)/2}, 0) {};
		}
		\foreach \i in {7,..., 14} {
		    \node[vertex,label=below:$q_{\i}$] (q\i) at ({(\b*12 + \i - 6)/2}, 0) {};
		}
		\node[vertex,label=below:$p_i$] (pi) at ({\b * 1}, 0) {};
		\node[vertex,label=below:$p_j$] (pj) at ({\b * 2}, 0) {};
		\node[vertex,label=below:$p_k$] (pk) at ({\b * 3}, 0) {};

        \foreach \i in {1,2,3,4,5} {
            \draw[edge, gray] (t\i) -- (pi);
        }
        \foreach \i in {1,2,3,4,5} {
            \draw[edge, gray] (t\i) -- (pj);
        }
        \foreach \i in {1,2,3,4,5} {
            \draw[edge, gray] (t\i) -- (pk);
        }
        \foreach \i/\j in {6/1,6/2, 7/3,7/4, 8/5,8/6} {
            \draw[edge] (t\i) -- (q\j);
        }

        \draw[edge] (t6) -- (pi);
        \draw[edge] (t7) -- (pj);
        \draw[edge] (t8) -- (pk);
        
	\end{tikzpicture}
	}
    \caption{A special profile $\Pi$ of 8 portfolios for 17 problems, out of which $q_7, \ldots q_{14}$ are not chosen by any team. This instance admits no simple schedule.}
    \label{fi:special_portfolio}
\end{figure}
 
\begin{thm} 
For the number $n$ of teams such that $n \equiv 2\ (\textrm{mod}\ 3)$ and $n\ge 8$, a simple schedule exists
if and only if the profile $\Pi$ simultaneously fulfills the following two conditions: 
\begin{enumerate}[(i)]\itemsep0pt
\item each problem is avoided by at least two teams;
\item $\Pi$ is not special.
\end{enumerate}
\end{thm}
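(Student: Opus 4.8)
The plan is to reduce the existence of a simple schedule to a purely combinatorial statement about the profile and then treat the two implications separately. Exactly as in the previous two theorems, a simple schedule places two $4$-rooms (since $|R_4| = n \bmod 3 = 2$) and the remaining $n-8$ teams into $3$-rooms; the latter is always possible because any three teams $S'$ satisfy $\Delta(G(S'))=3$ and are thus $3$-colorable by K\"onig's theorem. Hence a simple schedule exists if and only if $T$ contains two disjoint fine quadruples $S_1,S_2$. I would first record a convenient restatement of fineness: since every team vertex has degree exactly $3$, a quadruple $S$ has $\Delta(G(S))=3$ if and only if no problem lies in all four portfolios, i.e.\ $\bigcap_{t\in S}P(t)=\emptyset$. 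Writing $B_p=\{t\in T : p\notin P(t)\}$ for the set of teams that avoid $p$, this says $S$ is fine precisely when $S\cap B_p\neq\emptyset$ for every problem $p$ with $|B_p|\le n-4$ (the only problems whose $A_p=T\setminus B_p$ is large enough to contain a quadruple).

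For necessity I would argue the contrapositive for each condition. If $(i)$ fails, some problem $p$ has $|B_p|\le 1$; among any eight teams forming $S_1\cup S_2$ at most one avoids $p$, so at least seven contain $p$, and since $3+3<7$ one of the two quadruples consists entirely of teams containing $p$ and is therefore not fine. If $(ii)$ fails, $\Pi$ is special with core problems $p_i,p_j,p_k$, and the three teams $a=\{p_i,q_1,q_2\}$, $b=\{p_j,q_3,q_4\}$, $c=\{p_k,q_5,q_6\}$ are the only teams avoiding any of $p_i,p_j,p_k$, while the $n-3$ core teams contain all three. Then $B_{p_i}=\{b,c\}$, $B_{p_j}=\{a,c\}$, $B_{p_k}=\{a,b\}$, and a fine quadruple must meet all three of these sets. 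As each of $a,b,c$ lies in only two of them, hitting all three forces at least two of $\{a,b,c\}$ into the quadruple; two disjoint fine quadruples would then need at least four teams out of only three available, which is impossible. This establishes that both conditions are necessary.

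The substance of the theorem is sufficiency, and this is where I expect the real difficulty. Under $(i)$ every $B_p$ has at least two elements, so each constraint ``$S\cap B_p\neq\emptyset$'' is individually satisfiable; the problem is to realize all of them simultaneously for two \emph{disjoint} quadruples. A single fine quadruple is easy to produce: take any team $\{\alpha,\beta,\gamma\}$ and adjoin teams avoiding $\alpha$, $\beta$, $\gamma$ respectively, which exist by $(i)$. The only obstruction to peeling off a second disjoint one is a tight cluster of small avoidance sets, the extreme being three problems each avoided by exactly two teams whose avoidance sets are the three $2$-subsets of a common $3$-element team set --- which is precisely the special configuration excluded by $(ii)$. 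My plan is therefore to recast the requirement as finding two disjoint transversals of the set system $\{B_p : |B_p|\le n-4\}$, each extendable to a $4$-set, and to invoke Corollary~4.2 of \citet{Keszegh}, checking that its hypotheses translate exactly into ``$|B_p|\ge 2$ for every $p$'' together with ``the system is not the special sunflower-type pattern.''

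The main obstacle, and the reason a fully self-contained direct argument is delicate, is the bookkeeping when many avoidance sets have size $2$ or $3$ and overlap heavily: one must show that, apart from the single special pattern, the small avoidance sets can always be met by two disjoint quadruples, and that the surplus of ``free'' teams guaranteed by $n\ge 8$ lets each transversal be padded up to size four without accidentally creating a common problem in one of the quadruples. Verifying that the hypotheses of \citet{Keszegh} coincide precisely with $(i)$ and $(ii)$ --- in particular that the excluded extremal configuration there is exactly the special profile --- is the crux of the sufficiency direction.
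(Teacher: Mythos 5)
Your proposal is correct and takes essentially the same route as the paper: the paper gives no self-contained proof of this theorem either, but reduces it to the existence of two disjoint fine quadruples (the $3$-rooms being unproblematic by K\"onig's theorem) and then appeals to Corollary~4.2 of \citet{Keszegh}, exactly as you do for the sufficiency direction. Your explicit necessity argument (the counting for a problem avoided by at most one team, and the transversal analysis of the special profile via the sets $B_{p_i},B_{p_j},B_{p_k}$) is a small self-contained addition that the paper leaves implicit in that citation.
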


In regional tournaments, the organizers might decide to use more 4-rooms, however, we do not have a necessary and sufficient condition for the existence of a feasible schedule in this case and leave it as an open problem. 

\subsection{Order fair solutions}\label{s_order} 

Order fairness requires that no team takes up the same ordering position in any two of its Fights. In the case of 3-rooms only, this means that each team will present one problem as the first Presenter in the Fight, one as the second Presenter, and the third problem as the third Presenter. This corresponds to roles A, B, and C from Section~\ref{sec:details}. We now prove that order fairness is not a stronger criterion than feasibility.

\begin{thm}
Each feasible schedule can be transformed into an order fair schedule in polynomial time. 
\end{thm}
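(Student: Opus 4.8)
The plan is to exploit the fact that conditions $(i)$–$(iii)$ of Definition~\ref{def:feasible} refer only to the problem and room functions $\mathcal P$ and $\mathcal R$, and never to the order functions $\mathcal O$. So I would leave $\mathcal P$ and $\mathcal R$ exactly as they are in the given feasible schedule and recompute a fresh collection $\mathcal O$ that simultaneously satisfies condition $(iv)$ and the order-fairness requirement of Definition~\ref{def_order_fair}. This reduces the theorem to a purely combinatorial position-assignment problem, which I would model by an auxiliary bipartite graph $H$ whose vertex classes are the teams $T$ and the \emph{Fights}, i.e.\ the pairs $(j,r)$ actually used, with an edge joining $t$ to $(j,r)$ precisely when $\rho_j(t)=r$. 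In $H$ every team vertex has degree $3$ (one Fight per round) and every Fight vertex has degree equal to its size, $3$ or $4$. A proper edge coloring of $H$ with the colors $\{A,B,C,D\}$ makes the positions inside each Fight pairwise distinct (as needed for $(iv)$) and makes the three positions of each team pairwise distinct; since each team has exactly three incident edges, this last property is exactly order fairness.

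The single obstacle to invoking K\"onig's theorem directly is the asymmetry between the two Fight types: a $4$-Fight must receive all of $\{A,B,C,D\}$, whereas a $3$-Fight must receive exactly $\{A,B,C\}$, so the color $D$ must never appear on an edge incident to a $3$-Fight. I would therefore place the $D$ positions first. Writing $F_4$ for the set of $4$-Fights, I would look for a matching that saturates $F_4$ in the bipartite graph between $F_4$ and $T$ inherited from $H$; the team matched to a $4$-Fight is the one assigned position $D$ there. Since a matching uses each team at most once, no team receives $D$ twice, which is consistent with order fairness.

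The key step is to confirm that such a matching exists, and here I would verify Hall's condition. For any $S\subseteq F_4$, counting team--Fight incidences inside $S$ gives $4|S|=\sum_{f\in S}|f|\le 3|N(S)|$, because each team belongs to at most three Fights in total; hence $|N(S)|\ge\tfrac43|S|\ge|S|$, so Hall's condition holds and the saturating matching can be found in polynomial time. After fixing these $D$ positions I would delete the matched edges from $H$ to obtain a bipartite graph $H'$ in which every surviving Fight vertex has degree $3$ and every team vertex has degree at most $3$, so $\Delta(H')=3$. By K\"onig's theorem $H'$ admits a proper edge coloring with the three colors $\{A,B,C\}$, again computable in polynomial time. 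Reading $\mathcal O$ off from this coloring together with the chosen $D$ positions, each Fight receives exactly $\{A,B,C\}$ or $\{A,B,C,D\}$ according to its size, giving $(iv)$, while each team receives three pairwise distinct positions, giving order fairness. I expect the main difficulty to be precisely the $D$-versus-$3$-Fight restriction; once the preliminary matching disposes of it, the rest is a routine application of K\"onig's theorem, and every step runs in polynomial time.
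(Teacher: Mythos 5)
Your proposal is correct and follows essentially the same route as the paper: the same team--Fight bipartite graph $H$, the same preliminary matching saturating the 4-Fight vertices to place the D positions (justified by the same Hall-condition counting $4|S|\le 3|N(S)|$), followed by K\"onig's edge-coloring theorem on the residual degree-3 graph to assign $\{A,B,C\}$. No gaps; the argument and its polynomial-time claim match the paper's proof step for step.
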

\begin{proof}
A feasible schedule is given by the assignments $\mathcal{P}$, $\mathcal{R}$, and~$\mathcal{O}$. Our task is, based on the pair $\mathcal{P}$ and $\mathcal{R}$, to construct the allocation $\mathcal{O}'$, which encodes the order of teams within Fights in such a way that it fulfills Definition~\ref{def_order_fair}.

This time, we reach this goal with the help of a different bipartite graph than in Theorem~\ref{t_simple}. We start with constructing the bipartite graph $H(\mathcal{P},\mathcal{R})=(T \cup F,E)$ where the sets $T$ and $F$ of vertices correspond to the set of teams and to the set of Fights---i.e., pairs $(j,r)$ where $j$ is a round and $r$ is a room---in the feasible schedule, respectively. The pair $\{t,f\}$ where $f=(j,r)$ is an edge in $H$ if and only if $\varphi_j(t)=r$, i.e., team $t$ is assigned in round $j$ to room~$r$. 

An ordering of teams in Fights corresponds to an edge coloring in $H$ by four colors A, B, C, and D, with a special condition: color D can only be used for edges incident to vertices in $F$ that are of degree~4, i.e., based on rooms from~$R_4$. Team $t$ plays the role of the first Presenter in Fight $f$ if edge $\{t,f\}$ is colored by A. Similar holds for the remaining three colors. The special condition on color D is necessary, because the role of a fourth Presenter should only be allocated to 4-Fights. 

We propose a simple algorithm to construct an edge coloring respecting our conditions. In the first step, we calculate a matching $M_D$ covering all vertices $f \in F$ with $\deg(f) = 4$. Such a matching is guaranteed to exist, because any vertex set of 4-Fights fulfills the Hall-criterion~\citep{Hal35}. We know that $k$ 4-Fights are adjacent to $4k$ edges, which lead to some team vertices forming the neighborhood of the $k$ 4-Fights. Each of these team vertices is counted at most 3 times in the enumeration of the $4k$ edges, because of $\deg(t)=3$ in~$H$. Thus the neighborhood of the $k$ chosen vertices in $F$ has cardinality at least $k$ and so a matching $M_D$ covering all 4-Fight vertices must exist. For the edges in $M_D$ we fix color D, and remove these edges from the edge set~$E$. Notice that the maximum degree in the remainder of $H$ is~3, and each $f \in F$ now has $\deg(f) = 3$. By K\"onig's theorem, an edge coloring with 3 colors exists in this graph, and it can be found efficiently, by iteratively coloring all edges of a matching covering all vertices in $F$ with a fixed color \citep{Konig}. This coloring defines the roles A, B, and C so that each Fight will have exactly one team in each of these three roles.

This algorithm computes a maximum matching for each of the four roles. Computing such a matching is of computational complexity $O(\sqrt{|T \cup F|}|A|)$  
\citep{HK73}. Since the graph is of bounded degree, there are at most as many Fights as teams, and there is a constant number of matchings to be calculated, the computational complexity reduces to 
$O(n^{1.5})$.
\end{proof}

We now demonstrate our algorithm on the Example from Table~\ref{t_fair_schedule}, which contains a fair, but not order fair schedule for the real data from the tournament Bratislava 2018. Figure~\ref{f_B18f} depicts the bipartite graph $H(\mathcal{P},\mathcal{R})$ built for this schedule, and Table~\ref{t_order_fair_schedule} contains the schedule computed with the help of this graph.

\begin{figure}[htb]
    \centering
    \resizebox{\textwidth}{!}{
	\begin{tikzpicture}[transform shape]
	
	    \tikzstyle{edge} = [very thick]
        \tikzstyle{matchingedge} = [ultra thick, MyPurple, dashed]
        \tikzstyle{prefedge} = [ultra thick]
        \tikzstyle{prefedgeincom} = [dotted]
        \tikzstyle{scaling}=[font={\LARGE\bfseries},scale=0.5]
        \tikzstyle{edgereallabel}=[scaling]
        \tikzstyle{edgelabel} = [circle, fill=white, scaling]
	
		\pgfmathsetmacro{\b}{1.3}
		\pgfmathsetmacro{\d}{\b*3}
				
		\node[vertex,label=above:S1] (s1) at ({\b * (1 + 2)}, \d) {};
		\node[vertex,label=above:S2] (s2) at ({\b * (2 + 2)}, \d) {};
		\node[vertex,label=above:S3] (s3) at ({\b * (3 + 2)}, \d) {};
		\node[vertex,label=above:W1] (w1) at ({\b * (4 + 2)}, \d) {};
		\node[vertex,label=above:W2] (w2) at ({\b * (5 + 2)}, \d) {};
		\node[vertex,label=above:W3] (w3) at ({\b * (6 + 2)}, \d) {};
		\node[vertex,label=above:T1] (t1) at ({\b * (7 + 2)}, \d) {};
		\node[vertex,label=above:T2] (t2) at ({\b * (8 + 2)}, \d) {};
		\node[vertex,label=above:B1] (b1) at ({\b * (9 + 2)}, \d) {};
		\node[vertex,label=above:B2] (b2) at ({\b * (10 + 2)}, \d) {};
		\node[vertex,label=above:E] (e) at ({\b * (11 + 2)}, \d) {};
		\node[vertex,label=above:L] (l) at ({\b * (12 + 2)}, \d) {};
		\node[vertex,label=above:D] (d) at ({\b * (13 + 2)}, \d) {};	
		
		\node[vertex,label=below:F1.1] (f11) at ({\b * (3.5)}, 0) {};
		\node[vertex,label=below:F1.2] (f12) at ({\b * (4.5)}, 0) {};
		\node[vertex,label=below:F1.3] (f13) at ({\b * (5.5)}, 0) {};
		\node[vertex,label=below:F1.4] (f14) at ({\b * (6.5)}, 0) {};
		\node[vertex,label=below:F2.1] (f21) at ({\b * (7.5)}, 0) {};
		\node[vertex,label=below:F2.2] (f22) at ({\b * (8.5)}, 0) {};
		\node[vertex,label=below:F2.3] (f23) at ({\b * (9.5)}, 0) {};
		\node[vertex,label=below:F2.4] (f24) at ({\b * (10.5)}, 0) {};
		\node[vertex,label=below:F3.1] (f31) at ({\b * (11.5)}, 0) {};
		\node[vertex,label=below:F3.2] (f32) at ({\b * (12.5)}, 0) {};
		\node[vertex,label=below:F3.3] (f33) at ({\b * (13.5)}, 0) {};
		\node[vertex,label=below:F3.4] (f34) at ({\b * (14.5)}, 0) {};
		
		\foreach \i/\j in {w3/f24,b1/f34,d/f14} {
           \draw[edge, snake it, orange] (\i) -- (\j);
        }
				
	    \foreach \i/\j in {s1/f11,s2/f12,s3/f23,w1/f21,w2/f22,w3/f34,t1/f13,t2/f14,b2/f24,e/f32,l/f31,d/f33} {
           \draw[edge, dashed, green] (\i) -- (\j);
        }
		
         \foreach \i/\j in {s1/f21,s2/f32,s3/f13,w1/f11,w2/f12,t1/f33,t2/f34,b1/f22,b2/f31,e/f14,l/f23,d/f24} {
           \draw[edge, dotted] (\i) -- (\j);
        }
        
         \foreach \i/\j in {s1/f31,s2/f22,s3/f34,w1/f32,w2/f33,w3/f13,t1/f21,t2/f23,b1/f14,b2/f11,e/f24,l/f12} {
           \draw[edge, gray] (\i) -- (\j);
        }
        
	\end{tikzpicture}
	}
    \caption{An order fair schedule computed for the tournament Bratislava 2018. The team names are abbreviated to their first letter and team number, e.g., S1 denotes Sharks1, while the Fights can be identified based on the round and room in this order, e.g., F1.4 denotes round~1, room~4. Matching $M_D$ and role D is marked by wavy orange edges, role A is marked by dashed green edges, role B is marked by dotted black edges, and finally, role C is marked by solid gray edges.} 
    \label{f_B18f}
\end{figure}
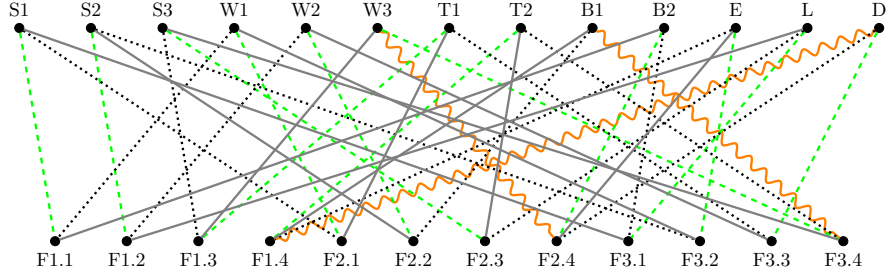

\begin{table}[ht]
\centering
\resizebox{\textwidth}{!}{
\begin{tabular}{|cl|lc|lc|lc|lc|}
   \noalign{\hrule}
 &&
\multicolumn{2}{c}{Room 1} & \multicolumn{2}{|c}{Room 2} & \multicolumn{2}{|c}{Room 3} & \multicolumn{2}{|c|}{Room 4} \\
	      \noalign{\hrule}
&&Team & Problem & Team & Problem & Team & Problem   & Team & Problem \\
   \noalign{\hrule}
\parbox[t]{2mm}{\multirow{4}{*}{\rotatebox[origin=c]{90}{{\bf Round 1}}}}
&A&Sharks1  & 6       & Sharks2 & 16    &  Turtles1  & 3   &  Turtles2 & 10  \\
&B& Whales1  & 3       & Whales2 & 12     & Sharks3 &1     &  Eagles & 9  \\
&C&Bears2  & 9    & Lions & 9   & Whales3  & 4      &  Bears1  & 8 \\
&D&&                      &&                     &&                          & Dogs & 7         \\    
 \noalign{\hrule}
&& 
\multicolumn{2}{c}{Room 1} & \multicolumn{2}{|c}{Room 2} & \multicolumn{2}{|c}{Room 3} & \multicolumn{2}{|c|}{Room 4} \\
	      \noalign{\hrule}
&&Team & Problem & Team & Problem & Team & Problem   & Team & Problem \\
   \noalign{\hrule}
\parbox[t]{2mm}{\multirow{4}{*}{\rotatebox[origin=c]{90}{{\bf Round 2}}}}
&A&Whales1  & 7        & Whales2 & 2     &  Sharks3 & 7   &  Bears2 & 5  \\
&B&Sharks1  & 4       & Bears1 & 4      &  Lions  & 4       &  Dogs  & 3  \\
&C&Turtles1  & 2        & Sharks2 & 10  &  Turtles2  & 6       &  Eagles & 16  \\
&D &  &                      &&                     &&                          & Whales3  & 10         \\    
\noalign{\hrule}
&& 
\multicolumn{2}{c}{Room 1} & \multicolumn{2}{|c}{Room 2} & \multicolumn{2}{|c}{Room 3} & \multicolumn{2}{|c|}{Room 4} \\
	      \noalign{\hrule}
&&Team & Problem & Team & Problem & Team & Problem   & Team & Problem \\
   \noalign{\hrule}
\parbox[t]{2mm}{\multirow{4}{*}{\rotatebox[origin=c]{90}{{\bf Round 3}}}}
&A&Lions  & 10                & Eagles & 4  &  Dogs  & 4        &  Whales3 & 9  \\
&B&Bears2  & 17            & Sharks2 & 17      &  Turtles1  & 14         &  Turtles2 & 5  \\
&C&Sharks1  & 14           & Whales1 & 14 & Whales2 & 5        & Sharks3  & 13    \\
 &D&&                    &&                     &&                                         &  Bears1  & 3       \\    
   \noalign{\hrule}
\end{tabular}
}
\caption{A fair and order fair schedule for the regional tournament Bratislava 2018.}\label{t_order_fair_schedule}
\end{table}

\section{Integer program for a fair schedule}\label{s_ILP}

In this section we present a family of integer linear programs to find fair schedules. First, we develop a compact representation of portfolios in order to model the scheduling problem with a small number of variables. Then we define the ILPs corresponding to our fairness notions.

We assume that the profile 
$\Pi$ is given in the form of triples, where $P(t_i)=(p^i_1,p^i_2,p^i_3)$ denotes the three problems  in the portfolio of team~$t_i$. We denote by $\ell(i,q)$ the index of the problem that is in the $q^{th}$ position in the portfolio of team~$t_i$, $i\in [n], q\in[3]$. 
Further, we construct for each $\ell\in[m]$ the list $T(\ell)$ of pairs $(i,q)$ such that problem $p_\ell$ is the $q^{th}$ problem in the portfolio of team $t_i$, i.e., 
$$T(\ell)=\{(i,q)\ |\ i\in [n]; \ p^i_q=p_\ell.\}$$
Let matrix $C$ with $n$ rows and $m$  columns be

$$c_{i\ell}  =
\left\{\begin{array}{ll}
1 & \mbox{\ if\ } p_\ell\in P(t_i)\\
0 & \mbox{\ otherwise.}
\end{array}\right.
$$

We illustrate this notation using the profile from Table~\ref{t_appl}. Let us consider team Sharks1 to be team~$t_1$. Then $P(t_1)=\{p_4,p_6,p_{14}\}$, hence $\ell(1,1)=4, \ell(1,2)=6$, and $\ell(1,3)=14$. If we take problem $p_{16}$, then $T(16)=\{(2,2),(11,3)\}$, when we take team Sharks2  to be team $t_2$ and team Eagles to be team $p_{11}$.

\noindent Let us introduce binary variables
$$x_{ijkq} \in \left\{ 0,1\right\} \mbox{ \quad for\quad} i\in [n]; \quad j\in[3]; \quad k\in [s]\quad q\in[3]$$
with the following interpretation.
$$x_{ijkq} =
\left\{\begin{array}{ll}
1 & \mbox{\ if team $t_i$ presents the $q^{th}$ problem from its portfolio in round $j$ in room $r_k$}\\
0 & \mbox{\ otherwise}
\end{array}\right.
$$
A feasible schedule is defined by the following system of equations and inequalities:
\begin{eqnarray}
\sum_{j=1}^3\sum_{k=1}^s x_{ijkq}&=& 1 \quad \mbox{\ for each team $t_i$ and each $q\in[3]$} \label{enn2}\\
\sum_{k=1}^s\sum_{q=1}^3 x_{ijkq}&=& 1 \quad \mbox{\ for each team $t_i$ and each round $j$} \label{en2}\\
\sum_{i=1}^n\sum_{q=1}^3 x_{ijkq}  &=& \size(r_k) \quad 
\mbox{\ for each round $j$ and each room $r_k\in R$ }
\label{en4}\\
\sum_{(i,q)\in T(\ell)} x_{ijkq} &\le &1 \quad 
\mbox{\ for each round $j$, each room $r_k$, and each problem $p_\ell$ } \label{en6}
\end{eqnarray} 
Binary solutions of system (\ref{enn2})-(\ref{en6}) correspond to feasible schedules, because these equations and inequalities mean the following.

\noindent (\ref{enn2}): Each team presents each problem from its portfolio exactly once.

\noindent (\ref{en2}): Each team presents in each round exactly one problem.

\noindent (\ref{en4}): 
In each round and in each room $r_k$ the number of presented problems is equal to~$\size(r_k)$.

\noindent (\ref{en6}): In each round and each room each problem is presented at most once.

These conditions are equivalent to criteria (a)--(c) from Section~\ref{sec:details}. We remark that choosing an order of Presenters so that it fulfills criterion~(d) from Section~\ref{sec:details} can be done easily, for example by assigning roles A, B, C---and D, in the case of 4-rooms---in an increasing order of team indices in each Fight. Order fairness, if required, can be obtained afterwards using our algorithm in Section~\ref{s_order}.



Let us now recall the fairness condition from Definition~\ref{def_fair}. A feasible schedule is {\bf fair} if the following holds: If team $t_\alpha$ is in some round $j$  in a room $r_k$ together with team $t_i$ who presents problem $p_\ell$ then $p_\ell\notin P(t_\alpha)$. This can be expressed by the following inequality:

\begin{equation}\label{en_fair2}
x_{ijkq}+\sum_{w=1}^3 x_{\alpha jkw}+c_{\alpha \ell(i,q)}\le 2
\mbox{\ for each $k\in [s]$, each $q\in[3]$, and each pair $i\ne \alpha$}.
\end{equation} 

Let us see how (\ref{en_fair2}) ensures fairness. Assume that team $t_i$ presents the problem that is stated in the $q^{th}$ position in $P(t_i)$ 
during the Fight that takes place in room $r_k$ in round~$j$. This means that $x_{ijkq}=1$. Team $t_\alpha$ is assigned to the same Fight if and only if it presents some problem in room $r_k$ in round $j$; this holds if and only if the second term on the left hand-side of inequality (\ref{en_fair2}) is equal to 1. In this case,  inequality (\ref{en_fair2})  implies $c_{\alpha \ell(i,q)}=0$, i.e., problem $p_{\ell(i,q)}$ is not in the portfolio of team~$t_\alpha$. This discussion implies the following assertion.

\begin{thm}
Fair schedules for IYPT correspond to the solutions of the integer linear program consisting of the feasibility constraints (\ref{enn2})--(\ref{en6}) and the fairness constraint (\ref{en_fair2}) formulated for each round~$j\in[3]$. Weakly fair schedules correspond to the solutions of (\ref{enn2})--(\ref{en6}) and the  constraint (\ref{en_fair2}) for~$j=1,2$.
\end{thm}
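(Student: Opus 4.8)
The plan is to prove two correspondences in parallel, one for fair and one for weakly fair schedules, since both rest on the same underlying argument. First I would make precise the correspondence, already sketched before the theorem, between binary solutions of the feasibility system (\ref{enn2})--(\ref{en6}) and feasible schedules. Given a binary solution $x$, constraint (\ref{enn2}) forces, for each team $t_i$ and each position $q\in[3]$, a unique pair $(j,k)$ with $x_{ijkq}=1$, while (\ref{en2}) forces, for each team and each round, a unique pair $(k,q)$; since there are three positions and three rounds, these two conditions together guarantee that each team presents its three portfolio problems across the three rounds, one problem per round, which is condition~$(i)$ of Definition~\ref{def:feasible}. Reading off $\pi_j(t_i)$ as the problem in the selected position and $\rho_j(t_i)$ as the selected room, constraint (\ref{en6}) then yields condition~$(ii)$ that no problem is presented twice within one Fight, and (\ref{en4}) yields condition~$(iii)$ on room sizes. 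The converse assignment ($x_{ijkq}=1$ precisely when team $t_i$ presents its $q$-th portfolio problem in round $j$ in room $r_k$) sends a feasible schedule back to a binary solution. Because the variables $x$ encode only $\pi$ and $\rho$ and not the ordering $\omega$, and because the fairness condition (\ref{df_fair}) refers only to $\pi$ and $\rho$, condition~$(iv)$ plays no role here: any ordering meeting criterion~(d) can be appended afterwards, exactly as noted before the theorem.

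The heart of the argument is to verify that, for a fixed round $j$, constraint (\ref{en_fair2}) holds for all $k\in[s]$, $q\in[3]$, $i\neq\alpha$ if and only if the fairness condition (\ref{df_fair}) holds in round $j$. For the forward direction I would argue by cases on the first two terms. If $x_{ijkq}=0$, the left-hand side is at most $0+1+1=2$ and the inequality holds trivially; likewise if $\sum_{w=1}^3 x_{\alpha jkw}=0$. The only binding case is $x_{ijkq}=1$ together with $\sum_{w=1}^3 x_{\alpha jkw}=1$, which means precisely that teams $t_i$ and $t_\alpha$ share the Fight $(j,r_k)$ and $t_i$ presents $p_{\ell(i,q)}$ there; fairness then gives $p_{\ell(i,q)}\notin P(t_\alpha)$, i.e.\ $c_{\alpha,\ell(i,q)}=0$, so the sum equals $2$ and the inequality is satisfied (and tight). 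For the converse, assume (\ref{en_fair2}) holds and take two distinct teams $t_i,t_\alpha$ with $\rho_j(t_i)=\rho_j(t_\alpha)=r_k$ and $\pi_j(t_i)=p$; writing $p=p_{\ell(i,q)}$ for the appropriate position $q$ gives $x_{ijkq}=1$ and $\sum_{w=1}^3 x_{\alpha jkw}=1$, whence (\ref{en_fair2}) forces $c_{\alpha,\ell(i,q)}\le 0$, i.e.\ $p\notin P(t_\alpha)$, which is exactly (\ref{df_fair}).

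The weakly fair statement follows by the identical equivalence applied only to $j\in\{1,2\}$, since Definition~\ref{def_wfair} is nothing but Definition~\ref{def_fair} with (\ref{df_fair}) restricted to the first two rounds; imposing (\ref{en_fair2}) for $j=1,2$ and leaving round~$3$ unconstrained by fairness therefore characterizes weakly fair schedules. I do not expect a serious obstacle, as the proof is essentially an unwinding of definitions. The one point deserving care is the logical structure of the correspondence: the $x$-variables capture the problem-and-room assignment but not the ordering, so I must emphasize that fairness and weak fairness are properties of $(\pi,\rho)$ alone, and that the correspondence is really between feasible schedules and the $x$-solutions modulo the free, always-completable choice of $\omega$. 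The second point to handle cleanly is confirming that (\ref{en_fair2}) does not over-constrain: it must forbid exactly the unfair configurations and permit every legitimate one, which the case analysis above makes explicit by showing the inequality is slack in all non-binding cases.
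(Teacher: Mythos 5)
Your proposal is correct and follows essentially the same route as the paper: the paper's own justification is the informal discussion preceding the theorem, which interprets constraints (\ref{enn2})--(\ref{en6}) as feasibility conditions $(a)$--$(c)$, notes that the ordering $\omega$ (criterion $(d)$) can be appended afterwards, and argues that in the binding case $x_{ijkq}=1$ and $\sum_{w=1}^3 x_{\alpha jkw}=1$ the inequality (\ref{en_fair2}) forces $c_{\alpha\ell(i,q)}=0$. Your write-up is simply a more explicit version of this, spelling out both directions of the equivalence and the restriction to $j=1,2$ for weak fairness, where the paper only sketches the direction that the constraint ensures fairness.
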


Let us now consider the strong fairness condition. Recall that a feasible schedule is strongly fair if no team $t_i$ deals with a problem $p_\ell$ more than once during the tournament (in either role). To formulate this condition, we introduce another set of non-negative variables:
$$y_{ijk\ell} \ \ge 0
\mbox{\quad for\quad } i\in [n]; \quad j\in[3]; \quad k\in [s];\quad \ell\in[m].$$ 
The desired interpretation of these variables is such that  $y_{ijk\ell}\ge 1$  if team $t_i$ can see problem $p_\ell$ during its presentation in round $j$ in room $r_k$; this is ensured by the inequalities (\ref{en_sfair}):
\begin{equation}\label{en_sfair}
y_{ijk\ell}\ge \sum_{w=1}^3 x_{ijkw}+\sum_{(\alpha,q)\in T(\ell)} x_{\alpha ijq}-1
\mbox{\ \ \ for each\ } i\in [n],j\in [3],k\in [s],\ell\in [m].
\end{equation}

To see this, notice that the first sum on the right-hand side is equal to 1 if team $t_i$ presents some problem in round $j$ in room $r_k$, which is equivalent to team $t_i$ being in this room in the respective round, otherwise it is equal to 0. The second sum is equal to 1 if problem $p_\ell$ is presented in round $j$ in room $r_k$ by some team $t_\alpha$, otherwise it is equal to 0. 
The inequalities ensuring strong fairness are
\begin{equation}\label{sf_2}
\sum_{j=1}^3\sum_{k=1}^s y_{ijk\ell}\le 1 \mbox{\ for each $i\in [n]$ and each $\ell\in [m]$},
\end{equation}
as they mean that each team $t_i$ can see any problem $p_\ell$ at most once.

\begin{thm}\label{thm:sfair:2}
 Strongly fair schedules for IYPT correspond to the solutions of the integer linear program consisting of the feasibility constraints (\ref{enn2})--(\ref{en6}), and inequalities (\ref{en_sfair}) and (\ref{sf_2}).
\end{thm}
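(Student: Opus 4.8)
The plan is to establish the stated correspondence in both directions, reusing the fact—argued in the text preceding (\ref{enn2})—that binary assignments of the variables $x_{ijkq}$ satisfying the feasibility constraints (\ref{enn2})--(\ref{en6}) are in bijection with feasible schedules. Consequently only the role of the auxiliary variables $y_{ijk\ell}$ together with (\ref{en_sfair}) and (\ref{sf_2}) needs analysis, and it suffices to show that, on top of a fixed feasible $x$-assignment, these constraints admit some $y\ge 0$ if and only if the corresponding schedule is strongly fair.

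The crux is to pin down the right-hand side of (\ref{en_sfair}) for a fixed feasible $x$. Denote by $a$ the first sum $\sum_{w=1}^3 x_{ijkw}$; by (\ref{en2}) this is $0$ or $1$ and equals $1$ exactly when team $t_i$ occupies room $r_k$ in round $j$. Denote by $b$ the second sum; by (\ref{en6}) it is likewise $0$ or $1$ and equals $1$ exactly when problem $p_\ell$ is presented in room $r_k$ in round $j$. Hence the right-hand side $a+b-1$ lies in $\{-1,0,1\}$ and equals $1$ precisely when both events occur, i.e.\ exactly when team $t_i$ deals with problem $p_\ell$ (in any role) during the Fight held in room $r_k$ in round $j$. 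Writing $e_{ijk\ell}:=\max\{0,\,a+b-1\}\in\{0,1\}$ for this encounter indicator, every $y\ge 0$ obeying (\ref{en_sfair}) satisfies $y_{ijk\ell}\ge e_{ijk\ell}$, and the pointwise-minimal admissible choice is $y_{ijk\ell}=e_{ijk\ell}$.

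For the direction from schedules to solutions, I would start from a strongly fair schedule, take the associated feasible $x$, and set $y_{ijk\ell}=e_{ijk\ell}$; then (\ref{en_sfair}) holds by construction and $y\ge0$ is immediate. The number of times team $t_i$ deals with problem $p_\ell$ over the whole tournament is exactly $\sum_{j=1}^3\sum_{k=1}^s e_{ijk\ell}$, and strong fairness—in the reformulation stated after Definition~\ref{def_sfair}, that no two problems a team deals with are identical—says precisely that this count never exceeds $1$, so (\ref{sf_2}) holds. Conversely, given any binary $x$ with some $y\ge0$ satisfying all the listed constraints, the $x$-part yields a feasible schedule, and from $y_{ijk\ell}\ge e_{ijk\ell}\ge 0$ with (\ref{sf_2}) we obtain $\sum_{j,k} e_{ijk\ell}\le\sum_{j,k} y_{ijk\ell}\le 1$ for all $i,\ell$. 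As $\sum_{j,k} e_{ijk\ell}$ is a nonnegative integer bounded by $1$, every team deals with every problem at most once, which is condition (\ref{e_sfair}); the schedule is strongly fair.

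The step I expect to require the most care is the observation that the right-hand side of (\ref{en_sfair}) never exceeds $1$ and equals the genuine encounter indicator $e_{ijk\ell}$. This is what lets the lower bound (\ref{en_sfair}) and the upper bound (\ref{sf_2}) cooperate correctly even though the $y$ variables are only declared nonnegative rather than binary: the lower bound forces $y$ up to at least the true encounter count, and the upper bound then caps that count at one. A secondary point worth stating explicitly is that ``dealing with $p_\ell$'' includes the case where $t_i$ itself presents $p_\ell$, so that the equivalence with (\ref{e_sfair}) genuinely covers all four roles; everything beyond these observations is bookkeeping on top of the already-established feasibility correspondence.
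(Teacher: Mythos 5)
Your proposal is correct and takes essentially the same approach as the paper: the paper's justification (given as the discussion preceding the theorem rather than a formal proof) likewise reads the two sums in (\ref{en_sfair}) as $0/1$ indicators of team $t_i$ occupying room $r_k$ in round $j$ and of problem $p_\ell$ being presented there, so that (\ref{en_sfair}) forces $y_{ijk\ell}$ up to the encounter indicator while (\ref{sf_2}) caps each team's total encounters with each problem at one. Your write-up merely makes explicit (via the minimal admissible $y$ and the two directions of the correspondence) what the paper states informally.
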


Finally, we express the condition of non-cooperativity in the form of an inequality. 
A schedule is non-cooperative if the inequality 
\begin{equation}\label{sf_2_d}
\sum_{i\in T_\lambda}\sum_{q=1}^3 x_{ijkq}\le 1.
\end{equation}
 holds for each $j\in[3]$, each $k\in [s]$ and each $\lambda\in[\Lambda]$.

\section{Computations}\label{s_comp} 
We now present our computational work on real and generated data in Sections~\ref{s_real} and~\ref{s_simulation}, respectively. 

\subsection{Real data}\label{s_real}

The members the two regional committees of the IYPT in Slovakia (based in Bratislava and in Ko\v sice) provided us with the portfolios for the years 2018 and 2019. They also showed us the schedules they prepared for  regional tournaments in these years. Let us mention here that all schedules used in reality were non-cooperative, but none of them was fair. We even encountered a team that had seen presentations of two of its problems before it presented them---see team Lions in Table~\ref{t_real_schedule}. 

We attempted to compute schedules that are non-cooperative and fair. In our simulations we used the open source solver \textit{lpsolve} \citep{BDE+07}, version 5.5 under Java wrapper library. We kept the default parameter settings for integer and mixed integer problems. The solver was running on a desktop computer with the processor Intel (R) Core (TM) i5-2500 3.3 GHz and 6 GB RAM.

A summary of the computations with real data is given in Table~\ref{t_5}. The columns contain the number of teams, the number of 3-rooms and 4-rooms, the number of variables and constraints in the constructed ILP, the computation time in seconds, and the degree of fairness, respectively. In 2018, the organizers of the regional tournament in Ko\v sice used three 4-rooms and only one 3-room for 15 teams and for this case the solver was not able to find an answer concerning  fair schedule (either output a solution  or the answer that the problem is infeasible)  within 30 minutes. However, we found a non-cooperative weakly fair solution for this case, and also a non-cooperative fair solution if the 15 teams were scheduled to fill up five 3-rooms. For all other portfolios from  years 2018 and 2019 we obtained a non-cooperative fair schedule within seconds.

2020 was the first year when we were involved in the preparation of the schedules for regional tournaments. In this year, the organizers expanded the number of local rounds from 2 to 4, in order to provide access to the competition to students from rural schools. The two additional tournaments took place in smaller cities Martin (closer to Bratislava) and Poprad (closer to Ko\v sice), each of them involving just 3 teams. This drained the Bratislava round to only 5 participating teams, while the Ko\v sice event took place with 13 teams. Moreover, one of the teams in Bratislava, called MIX, involved students from different schools and the portfolio of MIX contained the same problem twice. This highly unusual makeshift team does not fit the standard input conditions. For these reasons, our only real challenge was to prepare a non-cooperative fair schedule for the Ko\v sice event, which we succeeded in. Additionally, a non-cooperative fair schedule were also possible if the three teams from  Poprad would have joined the Ko\v sice event. We merged the teams at the Bratislava and the Martin events, and deleted the makeshift team MIX. For such a  tournament, the solver found within 0.09 seconds that the ILP for a non-cooperative fair schedule was infeasible, and within 0.056 seconds that even constructing a weakly fair variant was infeasible. This was probably due to the highly correlated profiles submitted---they completely avoided 6 out of the 17 problems and 2 problems were chosen by 4 out of 8 teams.

We remark that for the strong fairness criterion, the solver did not reach any conclusion within time limit of 30 minutes for any of the instances from the years 2018-2020.

Year 2021 was again special. Because of the pandemic, only one tournament  in Slovakia was organized with 9 participating teams, moreover, it took place online. One of the teams proposed to present either problem 13 or problem 17. We offered a strongly fair solution using problem 17. When we used problem 13, we were able to compute a fair solution, however, in this solution the same composition of a Fight was repeated in all three rounds. For the strong fairness criterion we were not able to reach any conclusion within more than one hour.

\begin{table}[ht]
\begin{center}
\resizebox{\textwidth}{!}{
\begin{tabular}{|c|ccccccc|}
	      \noalign{\hrule}
 File  & teams &  3-rooms &  4-rooms &  variables
&  constraints & run-time (s) & result\\
	      \noalign{\hrule}
KE2018 & 15 & 5 & 0 & 675 & 11 220 & \phantom{0}3.43 & Fair \\
KE2018 & 15 & 1 & 3 & 540 & \phantom{0}8 994 & 30 min & TimeOut \\
KE2018 & 15 & 1 & 3 & 540 & \phantom{0}6 474  & \phantom{0}6.51 & Weakly fair \\
BA2018 & 13 & 3 & 1 & 468 & \phantom{0}6 894 & \phantom{0}6.38 & Fair \\
KE2019 & 13 & 3 & 1 & 468 & \phantom{0}6 870 &  48.86 & Fair \\
BA2019 & \phantom{0}9 & 3 & 0 & 243 & \phantom{0}2 673 & \phantom{0}0.09 & Fair \\
KE2020 & 13 & 3 & 1 & 468 & \phantom{0}6 870 &  95.36 & Fair \\
KE+PP2020 & 16 & 4 & 1 & 720 & 12 682 & \phantom{0}7.42 & Fair\\
Slovakia2021 & 9 & 3 & 0 & 1620 & 5580 & 1.26 & Strongly fair\\
      \noalign{\hrule}
    \end{tabular}
    }
\end{center}
\caption{Summary of computations of non-cooperative fair schedules for real tournaments. }\label{t_5} 
\end{table}

\subsection{Randomly generated data}\label{s_simulation}

We randomly generated profiles that resemble  situations that could typically occur in practice. The structure of the generated samples was derived from the structure of profiles in recent years and from our knowledge of the situation in Physics education and schools in the respective regions.

Teams for the competition are nominated by schools and we assume that a `big’ school nominates between 2 and 4 teams whilst a `small’ school nominates 1 or 2 teams. Higher numbers were less probable. In more detail, we set the probabilities that a big school nominates 2, 3, and 4 teams to 
0.5, 0.3, and 0.2, respectively. For small schools, the probability of nominating one team was 0.75 and that of nominating 2 teams 0.25. Further, we assumed that not all problems are equally popular. Based on the situation in 2018 and 2019 we estimated that in the set of 17 published problems there are 8 problems with low popularity, 6 problems with medium popularity and 3 problems with high popularity. We assumed that a team chooses a problem of low popularity with probability $\mu$, a problem of medium popularity with probability $2\mu$ and a problem of high popularity with probability~$4\mu$. 

We generated 50 samples for region Bratislava and another 50 samples for region Ko\v sice.  We assumed that in region Bratislava there are 3 big schools and 3 small schools, whilst in region Ko\v sice there are 2 big schools and 6 small schools. The number of teams $n$ in the generated samples was between 9 and 15 for Bratislava and it was between 10 and 16 for Ko\v sice.

The results of computations of non-cooperative weakly fair, fair, and strongly fair schedules are summarized in Table~\ref{t_7}. The column labelled {\it undecided} shows the number and ratio of instances for which the solver stopped after 5 minutes due to the prescribed time-out without any result. Computation times are summarized separately for feasible and infeasible instances. Notice that we performed the computations of fair and strongly fair schedules even for instances where we already knew that a schedule fulfilling a weaker form of fairness does not exist so as to obtain a comparison of computation times. 

The computations depicted in 
Table ~\ref{t_7}
correspond to the choice of room sizes that follow the international rules. This means that 4-rooms are only used when necessary, i.e., when the number of teams $n$ is not divisible by 3, hence the number of 4-rooms is 0, 1, or 2. However, sometimes the organizers of regional tournaments want to minimize the number of rooms used and prefer 4-rooms. A different composition of room sizes is possible in our case if $n=12$, 15 or 16. The number of instances with such $n$ among Bratislava-type data was 14 and among Ko\v sice-type data it was 21. Notice that for $n=12$ and $n=16$, a schedule that uses only 4-rooms is possible, and for $n=15$, one can use 3 rooms of size 4 and one 3-room. In this case chances of the existence of a fair schedule are much lower. For Bratislava region and non-cooperative weak fairness, 8 instances out of 14 were infeasible, for 3 of them the solver was not able to find an answer within 1 hour, and only 3 instances admitted a weakly fair schedule; for one of them the answer was output after 19 minutes.  These results  are given in Table~\ref{t_10}.

\begin{table}[ht]
\begin{center}
\resizebox{\textwidth}{!}{
\begin{tabular}{|l|ccc|cc|cc|}
	      \noalign{\hrule}
   &  \multicolumn{3}{c|}{Number and ratio of instances}            
   & \multicolumn{2}{c|}{CPU  time (feasible)} 
   & \multicolumn{2}{c|}{CPU  time (infeasible)}\\
 Criterion     & infeasible   & undecided & feasible &  median & maximum &  median & maximum \\
	      \noalign{\hrule}
	      Bratislava\\
	      \noalign{\hrule}
Weakly fair  & 6 (12\%) & 7 (12\%) & 37 (74\%)    & 0.29  & 156.51  & 8.86 & 231.49\\
Fair    & 7 (14\%)  & 14 (24\%)  & 29 (58\%)     & 0.61  & 112.64 & 2.65 & 239.93 \\
Strongly fair & 6  (12\%)  & 43 (86\%)           & 1 (2\%)       & 1.22  & \phantom{00}1.22 & 1.26 & 162.46  \\
      \noalign{\hrule}
Ko\v sice\\    
    	      \noalign{\hrule}
Weakly fair  & 2 (4\%) & 3 (6\%)  & 45 (90\%)    & \phantom{0}0.66  & \phantom{0}47.48  & 0.53 & \phantom{0}91.56\\
Fair    & 2 (4\%)   & 20 (40\%)  & 28 (56\%)    & \phantom{0}2.27    & 269.63   & 1.42 & 169.61\\
Strongly fair & 2  (4\%) & 47 (94\%)  & 1 (2\%)      & 93.84  & \phantom{0}93.84  & 0.72 & \phantom{0}23.04\\
      \noalign{\hrule}
    \end{tabular}
    }
\end{center}
\caption{Summary  of computations for randomly generated data - room sizes according to the international rules.}\label{t_7}
\end{table}

\begin{table}[ht]
\begin{center}
\resizebox{\textwidth}{!}{
\begin{tabular}{|l|ccc|cc|cc|}
	      \noalign{\hrule}
   &  \multicolumn{3}{c|}{Number and ratio of instances}            
   & \multicolumn{2}{c|}{CPU  time (feasible)} 
   & \multicolumn{2}{c|}{CPU  time (infeasible)}\\
 Criterion     & infeasible   & undecided & feasible &  median & maximum  &  median & maximum \\
	      \noalign{\hrule}
Ko\v sice\\
	      \noalign{\hrule}
Weakly fair  & 7   (33\%)               & 6 (29\%)          & 8 (38\%)    & 0.94 & 189.04  & \phantom{0}4.14 & 399.45 \\
Fair & 7     (33\%)  & 10   (48\%)   & 4  (19\%)  & 3.68   & \phantom{0}26.06  & 12.60 & 572.56 \\
Strongly fair & 7  (33\%)   & 14 (67\%)               & 0      & n.a.  & n.a.  & \phantom{0}1.20 & \phantom{0}39.82 \\
      \noalign{\hrule}
        Bratislava\\
	      \noalign{\hrule}
Weakly fair  & 8   (57\%)   & 3 (21\%)          & 3 (21\%)    & 1.1 & 100.92  & \phantom{0}0.1 & 21.62 \\
Fair & 2     (66\%)  & 1   (33\%)   & 0    & n.a.   & n.a.  & 1.1 & 1166.8 \\
      \noalign{\hrule}
    \end{tabular}
    }
\end{center}
\caption{Summary of computations with minimum number of rooms, randomly generated data.}\label{t_10}
\end{table}

\section{Conclusion}

In this paper we studied the scheduling problem arising in the organization of regional competitions of the International Young Physicist Tournament. Based on considerations of organizers, we introduced novel fairness criteria for scheduling problems. To find fair schedules we proposed integer linear programs, applied them successfully to real profiles from recent years, and explored their behaviour on randomly generated data.

Our simulations revealed that if teams are allowed to choose their portfolios completely arbitrarily, then the chances of a non-cooperative fair schedule may be low. Let us therefore think about another approach. Suppose that instead of submitting a fixed portfolio, each team submits a preference ordering of the problems---perhaps it might even be allowed to label some problems as unacceptable. We seek a matching of teams to triples of problems, which enables a fair schedule, and is in a sense optimal. Several optimality criteria can be thought of, for example minimizing the position of the least preferred problem in the final portfolio of each team, or minimizing the weighted sum of ranks of assigned problems in the portfolio.

Notice that 
we leave the theoretical complexity of the existence of a fair schedule open. The feasibility constraints
(\ref{enn2})--(\ref{en4}) resemble a {\it multi-index transportation problem (MITP)} \citep{QS01} known to be an NP-hard problem, but constraint (\ref{en6}) and the additional constraints on non-cooperativity and fairness make our problem different.

Practically, in some cases it is easy to see why a fair schedule does not exist, e.g., if the portfolios are too similar to each other. The next theoretical step could be deriving some easily verifiable combinatorial certificate for unsolvable fair schedule instances.

We hope to have opened a new perspective on scheduling student competitions with our work. Our ILP model seems to be useful for the preparation of fair schedules of regional tournaments that are consistent with the IYPT rules of at least four countries: Austria, Germany, Slovakia, and Switzerland. As we have already stated in Section~\ref{sec:intro}, German organizers formalize conditions to be fulfilled in decreasing order of importance. Their first condition is our non-cooperativity constraint, then they restrict repeating roles in a slightly different manner as in our order fairness constraint, and finally, they state that each team should meet 6 different \textit{teams} in the course of the tournament, which is similar to our strong fairness, which enforces a lower bound on the number of \textit{problems} a team encounters during the tournament.

Furthermore, other competition schedules could potentially be automatized as well. A good starting point here is the analogous version of IYPT in mathematics, the International Tournament of Young Mathematicians. By applying an ILP approach to the rules at The World Universities Debating Championship or other debating tournaments we could also potentially determine fair schedules for debate rooms.

\bibliography{fyzboj}
\end{document}